\documentclass[twocolumn]{article}

\usepackage{microtype}
\usepackage{times}
\usepackage{geometry}
\usepackage{amsmath,amssymb,amsfonts}
\usepackage{amsthm}
\usepackage{mathrsfs}
\usepackage[numbers,compress]{natbib}
\usepackage{graphicx}
\usepackage{multirow}
\usepackage{booktabs}
\usepackage{xcolor}
\usepackage{float}

\usepackage{natbib}
\usepackage{hyperref}
\hypersetup{colorlinks=true, linkcolor=black, citecolor=blue, urlcolor=blue}

\usepackage{textcomp}
\usepackage[title]{appendix}

\theoremstyle{plain}
\newtheorem{theorem}{Theorem}

\newtheorem{corollary}[theorem]{Corollary}
\theoremstyle{definition}

\theoremstyle{remark}

\begin{document}

\twocolumn[
\begin{@twocolumnfalse}

\title{\textbf{Separable neural architectures as a primitive for unified predictive and generative intelligence}}

\author{
  Reza T. Batley$^{1}$ \quad
  Apurba Sarker$^{1\dagger}$\quad
  Rajib Mostakim$^{2\dagger}$\quad
  Andrew Klichine$^{1\dagger}$ \quad
  Sourav Saha$^{1,*}$
}

\date{
  $^1$Kevin T. Crofton Department of Aerospace and Ocean Engineering,
  Virginia Polytechnic Institute and State University, Blacksburg, VA 24060, USA\\
  $^2$Department of Mechanical Engineering,
  Bangladesh University of Engineering and Technology, Dhaka, Bangladesh\\
  $^*$Correspondence: \href{mailto:souravsaha@vt.edu}{souravsaha@vt.edu}
}

\maketitle

\begin{abstract}
Intelligent systems across physics, language and perception often exhibit factorisable structure, yet are typically modelled by monolithic neural architectures that do not explicitly exploit this structure. The separable neural architecture (SNA) addresses this by formalising a representational class that unifies additive, quadratic and tensor-decomposed neural models. By constraining interaction order and tensor rank, SNAs impose a structural inductive bias that factorises high-dimensional mappings into low-arity components. Separability need not be a property of the system itself: it often emerges in the coordinates or representations through which the system is expressed. Crucially, this coordinate-aware formulation reveals a structural analogy between chaotic spatiotemporal dynamics and linguistic autoregression. By treating continuous physical states as smooth, separable embeddings, SNAs enable distributional modelling of chaotic systems. This approach mitigates the nonphysical drift characteristics of deterministic operators whilst remaining applicable to discrete sequences. The compositional versatility of this approach is demonstrated across four domains: autonomous waypoint navigation via reinforcement learning, inverse generation of multifunctional microstructures, distributional modelling of turbulent flow and neural language modelling. These results establish the separable neural architecture as a domain-agnostic primitive for predictive and generative intelligence, capable of unifying both deterministic and distributional representations.
\end{abstract}

\vspace{1em}
\noindent\textbf{Keywords:} separable neural architectures, tensor decomposition, generative modelling, turbulence, metamaterials

\vspace{2em}

\end{@twocolumnfalse}
]

\begin{figure*}[!ht]
    \centering
    \includegraphics[width=\linewidth]{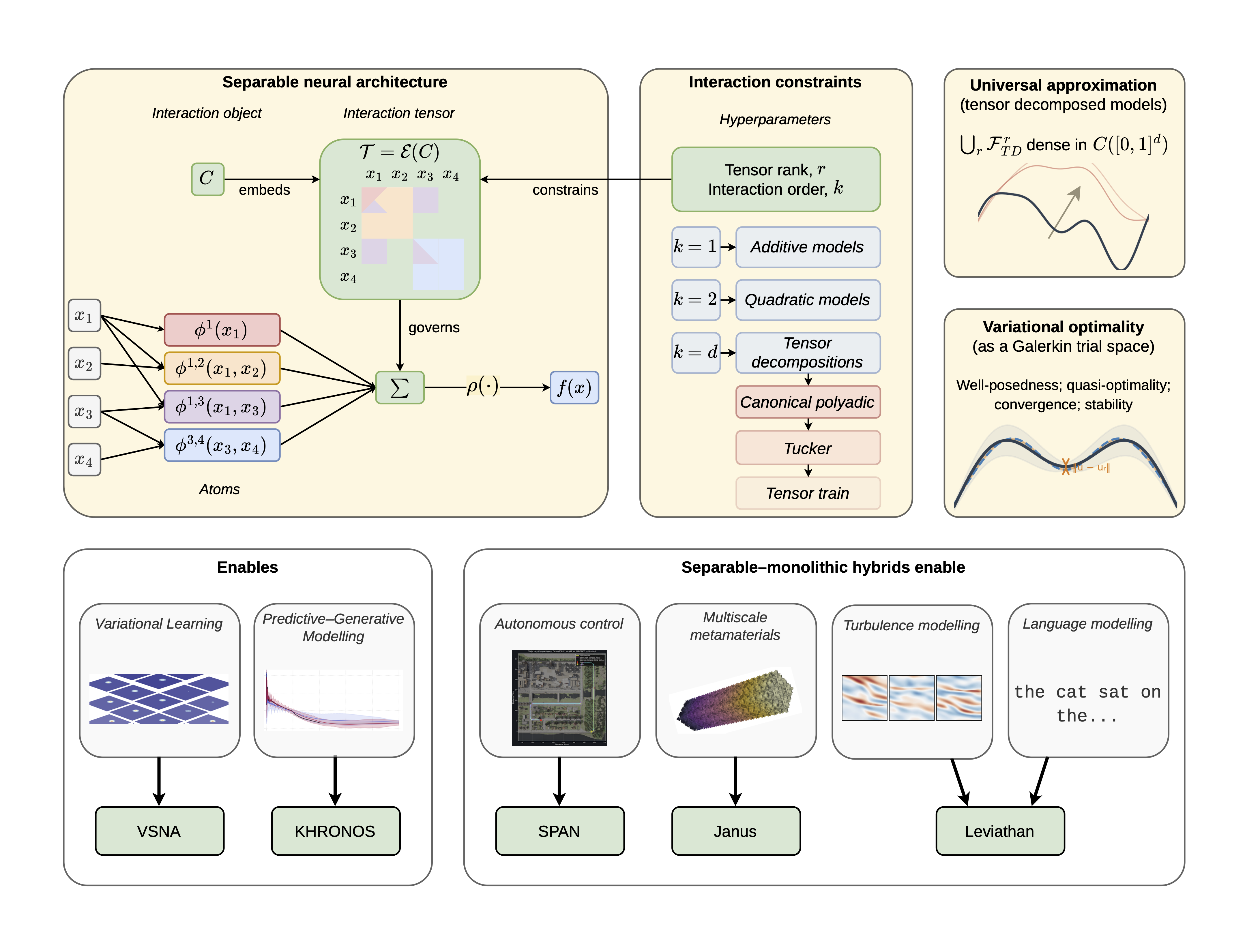}
    \caption{\textbf{The separable neural architecture (SNA) as a unified primitive for predictive and generative intelligence.} \textbf{Top:} The SNA formalises a representational class that constructs high-dimensional mappings by combining lower-arity learnable components (atoms) selected by an interaction tensor. By constraining interaction order and tensor rank, this formalism subsumes generalised additive, quadratic and tensor-decomposed neural models.}
    \label{fig:introduction}
\end{figure*}

Monolithic neural architectures have transformed artificial intelligence. The Transformer, with its ability to model long-range interactions across sequences, has achieved ubiquity in language modelling. Convolutional architectures remain highly effective for local feature extraction. However, systems across physical, linguistic and perceptual domains often exhibit latent factorisable structure that monolithic architectures leave implicit rather than exploit. Moreover, separability is often not a property of a system itself but of the coordinates or representations through which it is expressed. 

Accordingly, this work introduces the separable neural architecture (SNA) as a neural primitive: a rank- and interaction-controlled operator that serves as (i) a standalone model, (ii) a variational trial space, or (iii) a compositional module within larger intelligent systems. Formally, SNAs construct high-dimensional mappings from low-arity learnable components -- \emph{atoms} -- whose interactions are governed by an \emph{interaction object} that can be embedded as a sparse tensor. Expressivity is governed by two controls on the interaction object: its rank $r$ and interaction order $k$, which together control the capacity and sparsity of the learned representation. SNAs thus define a representational class subsuming additive, quadratic and tensor-decomposed neural models within a single formalism.

This primitive permits separable structure to be exploited where it arises, whether explicit, induced by a latent coordinate system or embedded within a larger architecture. When applied at the level of representation, SNAs enable continuous token embeddings that preserve neighbourhood relations in the underlying state space \cite{batley2026leviathan}. Adjacent physical states are thus adjacent in representation, a feature lacking in the discrete lookup embeddings of prevailing neural sequence models. For chaos precludes stable pointwise prediction over extended horizons; modelling must therefore be distributional to avoid nonphysical drift. Under this view, chaotic spatiotemporal dynamics and linguistic autoregression become structurally analogous: both benefit from modelling conditional distributions over sequentially revealed states. 

As a standalone model, the separable neural architecture realises compact predictive--generative intelligence in the form of KHRONOS \cite{batley2025khronos, sarker2026khronos}. KHRONOS instantiates an SNA whose low-rank, separable structure across all dimensions yields a smooth, cheaply invertible interpolant over the input space. Despite containing only hundreds of trainable parameters, it supports accurate prediction and rapid generative inversion to recover entire manifolds of admissible inputs consistent with a queried output. KHRONOS demonstrates that separable primitives can unify prediction and inversion within a single lightweight architecture able to operate in real time on commodity hardware.

This same primitive extends naturally from predictive--generative modelling to variational learning by reinterpreting KHRONOS as a structured Galerkin trial space. In this setting, the SNA is trained directly from a governing operator, yielding a variational separable neural architecture (VSNA) over spatiotemporal-parametric domains. This demonstrates that SNAs may serve as physics-faithful computational representations capable of learning high-dimensional fields from governing operators.

Thus, the present work introduces the SNA as a neural primitive for exploiting latent factorisable structure across intelligent systems (Fig. \ref{fig:introduction}). The SNA is shown to unify predictive and generative modelling across domains. It serves as a standalone architecture enabling parsimonious predictive--invertive learning (KHRONOS); as a trial space for operator-driven learning of high-dimensional spatiotemporal-parametric fields (VSNAs); and as a compositional module within larger intelligent systems, enabling efficient autonomous navigation agents (SPAN), generative inversion of bicontinuous multiscale metamaterials (Janus) and continuous token embeddings for probabilistic sequence modelling (Leviathan).

\section*{Separable neural architectures as a predictive--generative primitive}
\label{CSNA}
\subsection*{Predictive and generative modelling}
\label{PIKHRONOS}

The predictive--generative capacity of separable neural architectures is grounded in a specific subclass, arising when full interaction is permitted ($k=d$) and the interaction tensor is factorised into a rank-$r$ canonical polyadic (CP) decomposition. In this CP-class, atoms factorise into products of univariate sub-atoms $\psi$. Consistent with the convention in generalised decompositions, each atom $\phi$ constitutes a single \emph{mode}: a standalone functional contribution to the global representation. Letting $c^{(j)}$ denote modal weights and $\rho$ an activation function, an element of this class may therefore be written
\begin{align}
    f_r(x;\Theta_{CP})=\rho\left(\sum_{j=1}^rc^{(j)}\prod_{i=1}^d\psi^{(j)}(x_i; \theta_i^{(k)})\right).
\end{align}

The CP-class is grounded in a concrete setting through KHRONOS \cite{batley2025khronos, sarker2026khronos}, a CP-class SNA adopting identity activation  ($\rho(x)=x$), unit modal weights ($c^{(j)}\equiv1$) and B-spline subatoms. This particular CP-class network structure traces its lineage to the interpolating neural network (INN) \cite{park2025unifying} and its Hierarchical Deep Learning Neural Network (HiDeNN) predecessors \cite{saha2021hidenn, zhang2022hidenntd}. KHRONOS has separately demonstrated 100-fold gains over Kolmogorov-Arnold Networks \cite{liu2025kan} on canonical PDE benchmarks \cite{batley2025khronos}. On multi-fidelity aerodynamic field prediction, it achieves accuracy comparable to multilayer perceptrons (MLPs), graph neural networks (GNNs) and physics informed neural networks (PINNs) \cite{raissi2019pinn} with $94-98\%$ fewer parameters \cite{sarker2026khronos}. KHRONOS yields a smooth, separable interpolant over the input coordinate space. For a spline basis of order $P$ over $C_i$ interior cells in each dimension $d$, each subatom takes the form
\begin{align}
    \psi^{(j)}(x_i;\theta^{(j)}_i)=\sum_{c=1}^{C_i+P}\alpha^{(j)}_{i,c}B^{P}_c(x_i).
\end{align}
The extended index $C_i+P$ accounts for domain-exterior ‘‘ghost'' cells required to preserve partition of unity \cite{piegl1997nurbs}.

KHRONOS is demonstrated on a process--structure inverse modelling problem investigated originally in \cite{xie2021mechanistic, fang2022data}, linking thermal histories recorded during directed energy deposition (DED) of Inconel 718 thin-walls to spatially varying mechanical properties of the resultant print. Inverse modelling of 3D printed structures has been a challenge for engineers, as seemingly minor perturbations in the process may result in drastically varying mechanical properties within a single part. A schematic of the experimental setup for this problem is presented in Fig. \ref{fig:SNA}\textbf{a}. The raw thermal signals, recorded by infrared (IR) sensors, are stochastic and nonlinear, as well as long ($10,000$ time indices), whilst available paired data (yield strength, ultimate tensile strength (UTS), and elastic modulus) are sparse (96 samples). The mechanical properties were determined via uniaxial testing of small tensile coupons at different points of a thin wall built with DED.

\begin{figure*}[!ht]
    \centering
    \includegraphics[width=1.0\linewidth]{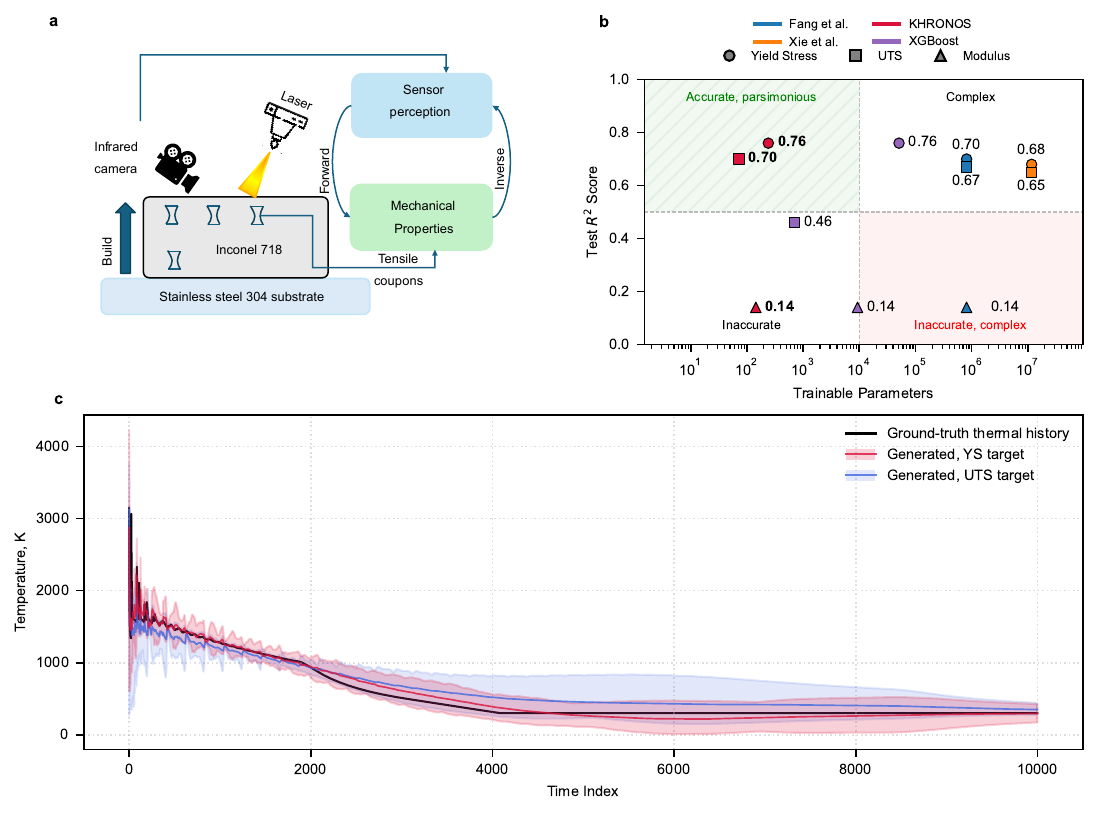}
    \caption{\textbf{Prediction and inversion with a canonical separable neural architecture.} \textbf{a}, A schematic of the experimental setup. A laser directed energy deposition machine builds thin-walled structures layer by layer on a stainless steel 304 substrate, whilst an infrared camera records the evolving thermal field during the build. These measurements are subsequently linked to the mechanical response of the material through tensile testing of extracted coupons. \textbf{b}, Predictive performance versus trainable parameters on the Inconel 718 thermal-history dataset. KHRONOS matches or exceeds prior model accuracy on both yield stress (YS) and ultimate tensile strength (UTS) with up to five orders-of-magnitude fewer parameters \cite{xie2021mechanistic, fang2022data}, and three orders fewer than XGBoost \cite{chen2016xgboost}.
    \textbf{c}, Generative inversion of target mechanical properties to thermal histories. KHRONOS's lightweight structure enables rapid recovery of entire ensembles of plausible histories consistent with queried YS and UTS targets. Here, 47 plausible thermal histories consistent with a target YS of 399.9MPa were recovered in 47.3ms; 64 consistent with a UTS of 670.4MPa in 39.5ms. The illustrated mean and range of converged trajectories closely match the ground-truth thermal history.}
    \label{fig:SNA}
\end{figure*}

Thermal data preprocessing follows \cite{xie2021mechanistic} beginning with a wavelet transform. Previous works fed these high-dimensional representations into monolithic convolutional neural networks (CNNs) -- with approximately 11 million parameters in the modified ResNet18 of \cite{xie2021mechanistic} and 800,000 in the one-dimensional CNN of \cite{fang2022data}. A subsequent principal component analysis (PCA) is introduced, decomposing the preprocessed representations into a low-dimensional latent space. This coordinate transform reveals the factorisability of the thermal physics. Exploiting this, KHRONOS required only 240 parameters for yield stress (YS) and 108 for ultimate tensile strength (UTS) -- a reduction of four to five orders of magnitude. Despite this parsimony, KHRONOS achieved test $R^2$ scores of 0.76 (YS) and 0.70 (UTS), matching or exceeding prior approaches; the only model to achieve the highest score on both metrics. A comparative summary, including an XGBoost baseline \cite{chen2016xgboost}, is visualised in Fig. \ref{fig:SNA}\textbf{b}. All models exhibit poor performance for the material modulus, seemingly saturating at $R^2=0.14$. This is consistent with prior studies: elastic modulus is largely composition-controlled and therefore poorly resolved by thermal histories alone, which are further compounded by IR sensor noise and uncertainties inherent to the deposition process.

Inverting opaque monolithic models requires expensive surrogate optimisation or the training of separate inverse networks. KHRONOS, however, admits easily traced (even analytic) derivatives and is highly lightweight. These properties enable target mechanical properties to be inverted to plausible thermal histories via a structured Newton search. Multiple initialisations produce a low-dimensional manifold of solutions consistent with the queried property. The resulting inversions recover ensembles of thermal histories that closely resemble the ground-truth trajectory with a reasonable uncertainty envelope. This is illustrated in Fig. \ref{fig:SNA}\textbf{c}. The search is so lightweight, in fact, that tens of such histories (47 for YS; 64 for UTS) are generated in under 50ms on commodity CPU hardware. This suggests a route towards an intelligent feedback control system that, deployed in a 3D printing machine, could maintain target mechanical properties throughout the build irrespective of the printing strategy. 

\subsection*{Variational learning of spatiotemporal--parametric fields}
\label{sec:VKHRONOS}

In the sense of classical variational calculus, the variational instantiation of separable neural architectures (VSNAs) for the solution of PDEs follows the same rank-controlled, separable structure that defines predictive SNAs. The key distinction is that VSNAs learn directly from governing operators rather than from data. In this setting, the separable representation acts as a global trial space over an entire spatiotemporal-parametric domain, treating it as a continuous physical manifold to be recovered. 

VSNAs sit at the intersection of established paradigms for physics-based solution of spatiotemporal-parametric fields. Comparable finite-element approaches discretise the entire spatiotemporal--parameter domain with high-dimensional shape functions, but encounter the ‘‘curse of dimensionality'': an exponential growth in degrees of freedom. Physics-informed neural networks (PINNs; \cite{raissi2019pinn}) instead parameterise the solution space with a monolithic neural field trained by minimising strong-form PDE residuals together with boundary and initial condition losses, imposed only softly, and also lack variational optimality guarantees. Proper generalised decomposition (PGD; \cite{Chinesta2011}) employs similar low-rank tensor products to address high-dimensional PDEs but relies on a ‘‘greedy'' training strategy: modes are optimised sequentially and then frozen. This prevents communication between rank components, often requiring a higher rank to reach a given accuracy than global training would. VSNAs unify these perspectives by combining operator-driven variational training with a separable and learned representation -- structurally akin to PGD -- trained globally, as in neural approaches.

\begin{figure*}[!ht]
    \centering
    \label{fig:VSNA}
    \includegraphics[width=0.97\linewidth]{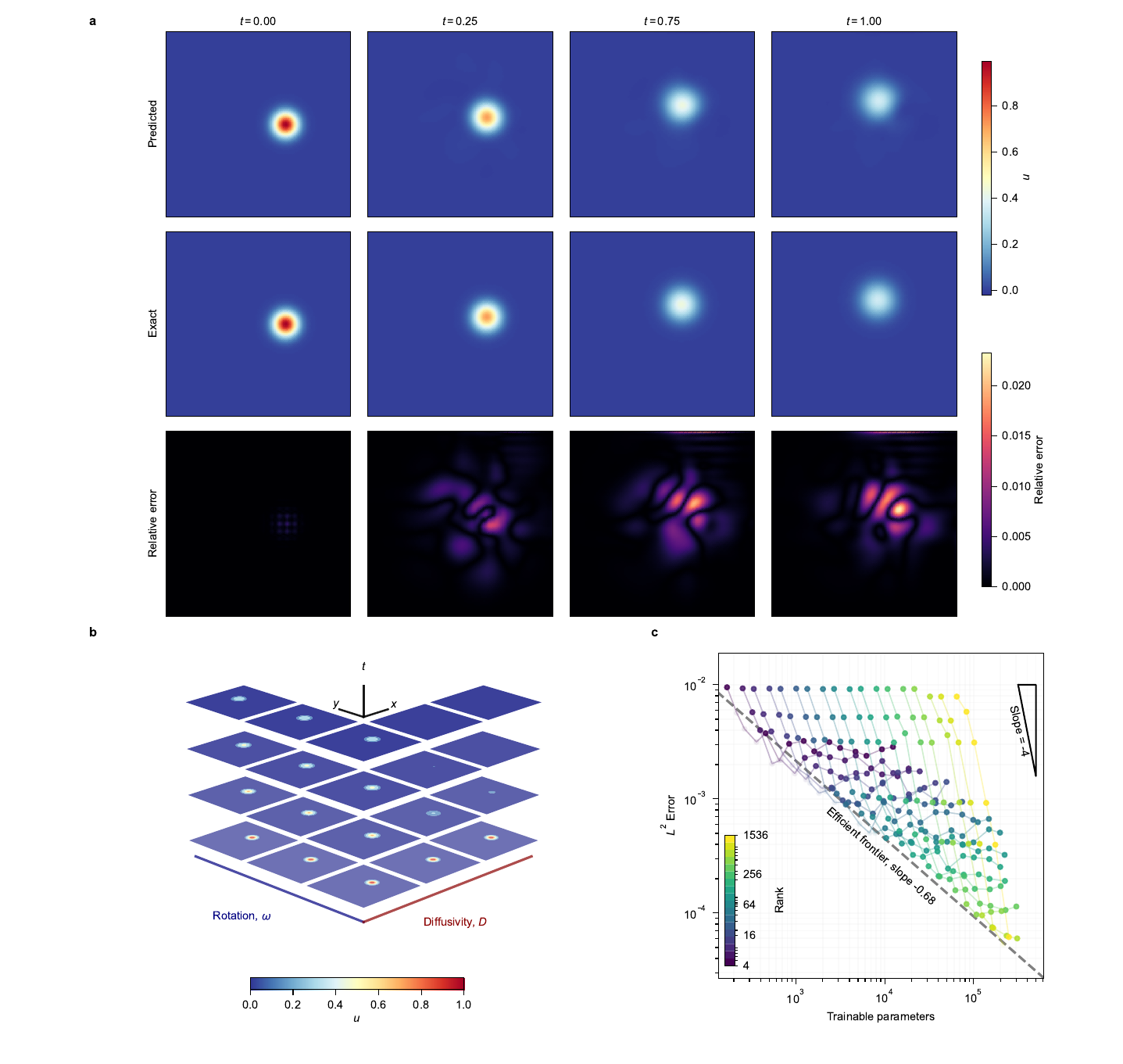}
    \caption{\textbf{Variational separable neural architectures recover high-dimensional PDE solution manifolds with favourable scaling.} \textbf{a}, Spatiotemporal evolution of the field for fixed $\omega=\frac \pi 3$ and $D=0.001$. The top and middle rows compare KHRONOS's predicted solution with the exact, and the bottom shows KHRONOS's relative error. \textbf{b}, The six-dimensional spatiotemporal-parametric advection-diffusion field learned by KHRONOS. Stacked ($x-y$) spatial slices across time $t$ are shown across rotation-diffusivity ($\omega-D$) parameter space, illustrating the recovery of the entire solution manifold in a single global representation. \textbf{c}, Approximation ($L^2$) error versus trainable parameters for the same system under refinement of rank $R$ and resolution $C$. Rank-isolines are connected and colour-coded. Along rank-isolines, errors decrease with resolution at slope $=-4$ before saturating at the rank capacity limit. Across ranks, an efficient frontier emerges (fitted slope $\approx-0.68$ in log-log space), sustained across four orders of magnitude in parameter count.}
\end{figure*}

The VSNA instance examined herein is KHRONOS, the CP-class SNA with each coordinate direction represented by a learned B-spline basis expansion. Although identical in functional form to the predictive model, it is here interpreted variationally as forming a finite-rank trial space over the spatiotemporal-parametric domain. Such separable trial spaces are dense in the underlying Hilbert space under mild assumptions. It follows immediately that the global solution is approximable to arbitrary precision as rank $R$ and resolution $C$ are increased. Under standard assumptions on the governing operator -- most notably boundedness and coercivity of a bilinear form -- KHRONOS converges in the same limits.

The PDE solution is obtained by physics-based training via least-squares minimisation of the governing operator residual. To illustrate this principle, a six-dimensional spatiotemporal--parametric advection--diffusion system is considered:
\begin{align}
    \frac{\partial u}{\partial t}+\boldsymbol{U}\cdot\nabla u-D\nabla^2u=0.
\end{align}
The field $u$ evolves over spatial coordinates $(x,y,z)\in[0,1]^3$ with homogeneous Dirichlet boundary conditions, as well as time $t\in[0,1]$, angular velocity $\omega\in[0,\tfrac \pi 3]$ and diffusivity $D\in[0.001, 0.01]$. The motion of an initial Gaussian plume is driven by a two-dimensional solid-body rotating wind $\boldsymbol{U}=[-\omega(y-\tfrac1 2),\omega(x-\tfrac1 2),0]^T$. In physical application, such a system might model the
transport and dissipation of a scalar quantity -- energy, aerosols, perhaps pollutants -- within a rotating fluid domain \cite{vallis2017atmospheric}.

KHRONOS captures the full six-dimensional solution manifold as a low-rank separable field over space, time and parameters. This representation permits the continuous field to be queried at arbitrary locations in space, time \emph{and} parameter space. Whereas a classical FEM or standard PINN approach would require a full re-solve for each desired parameter combination, KHRONOS provides the full space-time field, queried in milliseconds. Fig. \ref{fig:VSNA}\textbf{a} illustrates representative two-dimensional spatial slices at $t=0,0.5$ and $1$ of the learned six-dimensional manifold at $\omega=\frac \pi 4, D=0.001$. KHRONOS's prediction is compared against a semi-analytic proxy of the governing system, and space-wise absolute errors are shown. KHRONOS reproduces both rotational transport and diffusive spread -- albeit mild at this $D$ -- with high fidelity across time. Errors remain smooth and spatially structured. Having established that the CP-class VSNA recovers the coupled spatiotemporal--parametric dynamics, the natural question is how this accuracy scales with computational resources.

Fig. \ref{fig:VSNA}\textbf{b} quantifies approximation accuracy under joint refinement of rank $R$ and resolution $C$. Error initially decreases systematically with slope $=-4$ with $C$-refinement -- as is expected with cubic B-splines -- but saturates once rank capacity is reached. This combined effect produces an efficient frontier sustained across four orders of magnitude in trainable parameters $N$. This frontier follows an empirical scaling $\|e\|_{L^2}\approx0.24N^{-0.68}$, consistent with the theoretical convergence rate of $-\frac p d =-\frac4 6$ for cubic B-splines in six dimensions. 

The improved intercept compresses the parameters needed to achieve a target error by three orders of magnitude compared to six-dimensional cubic B-spline FEM. This understates the advantage: achieving comparable error would require a mesh density exceeding memory limits by many orders of magnitude, with a corresponding $O(N^{18})$ explosion in solver complexity for a direct solve.  

Collectively, these results establish the separable neural architecture as a highly capable standalone primitive. Whether learning from sparse, noisy data to predict and invert thermal histories, or acting as a Galerkin trial space for the solution of high-dimensional PDEs, the SNA exploits separable structure where it may exist. Having demonstrated its efficacy as an isolated primitive -- further validated on high-dimensional regression and a nonlinear PDE (see Appendix) -- the natural progression is as a structural inductive bias within larger, separable--monolithic composite learning systems.

\section*{Composite learning systems}

\subsection*{Generative inversion of multiscale metamaterials}

\begin{table*}[!h]
\centering
\caption{\textbf{L-BOM dataset features}. Inputs are symmetry-reduced unit cells; outputs comprise the 21-component elastic tensor, volume fraction and permeability.}\label{tab:dataset}
\begin{tabular}{l c l}
\toprule
\textbf{Feature} & \textbf{Size} & \textbf{Description} \\ 
\midrule
\multicolumn{3}{l}{\textbf{Input}} \\
Voxel Grid & $64^3$ & Origin-anchored octant of L-BOM. Binary field. \\ 
\midrule
\multicolumn{3}{l}{\textbf{Outputs}} \\
Normal Stiffness & 3 & Axial ($C_{1111}, C_{2222}, C_{3333}$). \\
Normal Coupling & 3 & Off-diagonal ($C_{1122}, C_{1133}, C_{2233}$). \\
Shear Stiffness & 3 & Diagonal ($C_{1212}, C_{1313}, C_{2323}$). \\
Shear Cross & 12 & Identically zero due to symmetry. \\
Volume & 1 & Solid volume fraction. \\
Permeability & 1 & Fluid transport capacity. \\ 
\bottomrule
\end{tabular}
\end{table*}

Janus \cite{batley2026janus} is a bidirectional framework for generative inversion of three-dimensional multiscale metamaterials, in which the SNA serves as a compositional module within a larger intelligent system. At the macroscale, topological optimisation typically demands continuously varying and specific mechanical properties to maximise structural efficiency \cite{bendsoe1988generating, bendsoe1999material, fleck2010micro}. However, designing these property fields at the microscale requires solving an ill-posed inverse homogenisation problem. Traditional concurrent multiscale approaches are computationally prohibitive \cite{rodrigues2002hierarchical, feyel2000fe2}. Recent data-driven approaches either rely on pre-computed libraries \cite{panetta2015elastic, wang2026lbom} or sample from monolithic generative models \cite{wang2020deep, bastek2022inverting}, typically suffering from limited property coverage and disjointed boundaries \cite{garner2019compatibility, zheng2021datadriven}. Janus circumvents these limitations by treating the continuous physical state as a separable embedding. Each unit cell microstructure is generated via gradient-based maximum a posteriori (MAP) inversion \cite{bora2017compressed, yeh2017semantic} in a highly compressed latent space. This approach encourages topological veracity (‘‘on manifold'' behaviour) and perfect boundary connectivity.

\begin{figure*}[!ht]
    \centering
    \includegraphics[width=0.9\linewidth]{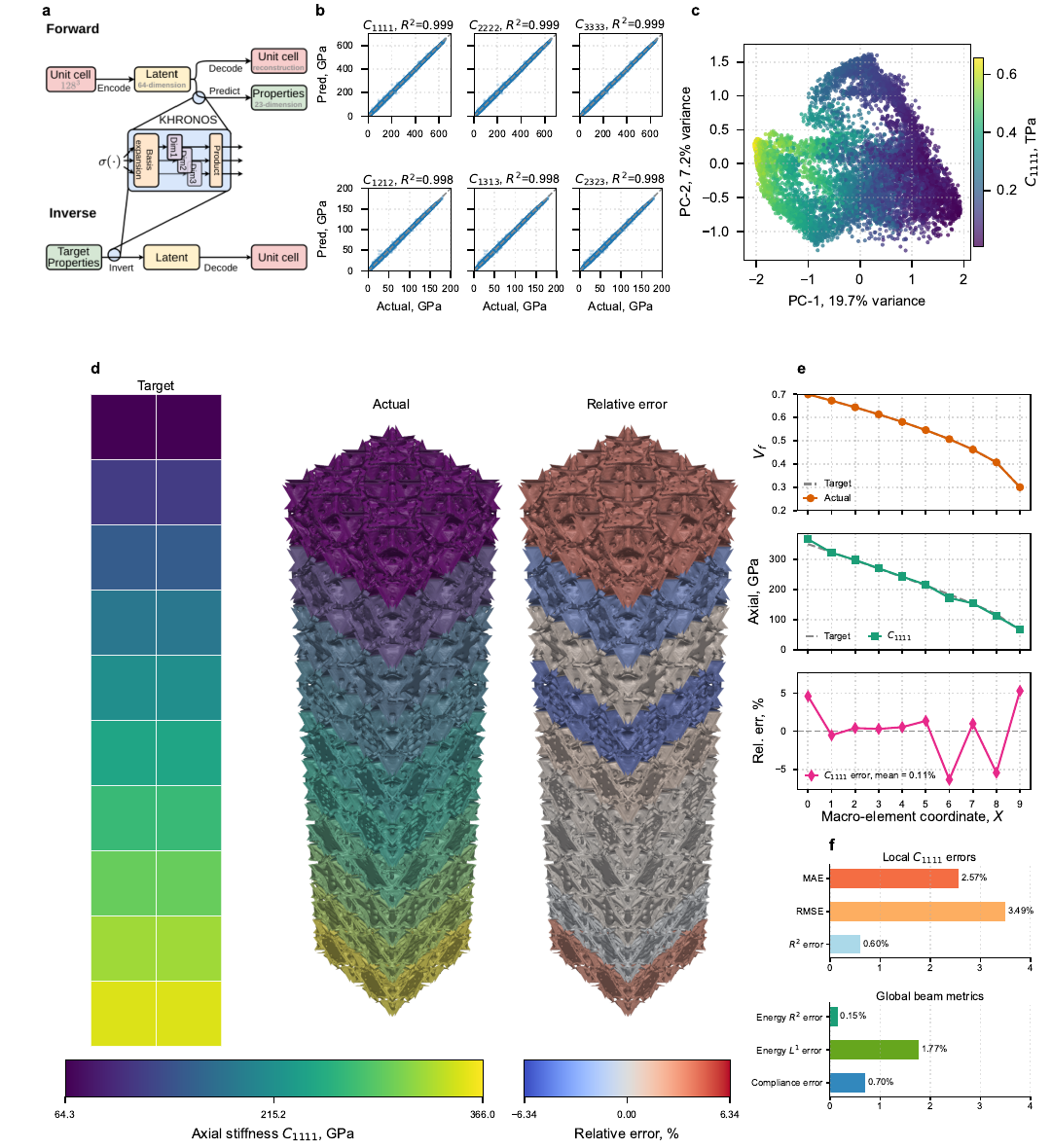}
    \caption{\textbf{Bidirectional generative framework and realisation of seamless, multiscale metamaterials.} \textbf{a}, Schematic of Janus's architecture. A three-dimensional convolutional autoencoder encodes a unit cell voxelised microstructure into a 64-dimensional latent space from which it learns to reconstruct them. A separable neural architecture head, similar to KHRONOS, predicts physical properties from the latent. This head is readily inverted to generate a new microstructure given target properties. \textbf{b}, Forward prediction accuracy of Janus on key components of the stress tensor from a held-out test set, demonstrating near-perfect correlation. \textbf{c}, Principal component analysis (PCA) of the latent space coloured by axial stiffness $C_{1111}$, highlighting the smooth manifold learned by the network. \textbf{d}, Macroscale $C_{1111}$ stiffness targets as prescribed by the cantilever beam model, volumetric rendering of the 40-cell multiscale beam with Janus-designed microstructures, and rendering shaded with local relative errors as determined by FFT homogenisation. \textbf{e}, Beamwise validation of the designed property field. Actual volume fraction exactly tracks the target, and axial stiffness closely agrees across the beam with low relative errors. \textbf{f}, Summary of local stiffness-field and global beam-level metrics. Local $C_{1111}$ errors remain below $3.5\%$, whilst global metrics remain below $2\%$, confirming the intended structural-level response.}
    \label{fig:janus}
\end{figure*}

Janus is validated on a macro-scale beam comprising $10\times2\times2$ unit cells. The target property field requires a monotonic reduction in solid volume fraction $V_f$ from 0.65 at the root to 0.25 at the tip, paired with a corresponding gradient in the primary load-bearing axial stiffness $C_{1111}$ of 350GPa at the root down to 50GPa at the tip. Specifically, this gradient is derived from a cantilever beam model: the bending moment distribution under tip loading prescribes a monotonically decreasing stiffness field, with local Young's modulus scaled from the volume fraction via a SIMP power law \cite{bendsoe1999material}. The first step involves training Janus on a large-range, boundary-identical, bicontinuous and open-cell microstructure (L-BOM) dataset \cite{wang2026lbom}. This dataset contains 10,770 boundary-masked $128\times128\times128$ microstructures. Due to cubic symmetry, origin-anchored $64\times64\times64$ octants are used as input data. 

Within Janus, the separable head learns to predict the 23 physical properties, as detailed in Table \ref{tab:dataset}, from 64-dimensional latent codes generated by the encoder from these octants. A separate decoder head learns to reconstruct the original octant from the same latent code. This schematic is visualised in Fig. \ref{fig:janus}\textbf{a}. In this phase, Janus achieves a reconstructive binary cross-entropy loss of 8\%, an $R^2=0.82$ for permeability and $R^2>0.99$ for all normal stiffness and coupling terms of the stress tensor from the latent space. Parity plots of the axial and shear stiffnesses are shown in Fig. \ref{fig:janus}\textbf{b}. Unlike the isotropic clouds typical of probabilistic generative models, Janus's latent space is structured and smooth, as can be seen in Fig. \ref{fig:janus}\textbf{c}. Janus achieves a cycle consistency of $2\%$, indicating that the latent space is stable under encode--decode cycles.

Janus is subsequently deployed for generative inversion. Maximum a posteriori inversion guards against off-manifold latent codes and aids in avoiding gradient hallucination -- the discovery of pathological latent codes that ‘‘trick'' the predictor whilst diverging from true physics (cf. \cite{goodfellow2015explaining}). Ensembling is also used: for each unit cell, Janus produces 16 candidate microstructures in parallel with the lowest error -- as determined by a final FFT solve -- selected for the macro-structure. Since Janus learns a continuous topological field, volume-preserving thresholding is used for binarisation to ensure exact adherence to desired volume fraction. This entire process takes two-and-a-half minutes to construct the multiscale beam composed of 84 million voxels. 

The stiffness targets prescribed by the cantilever beam model are visualised in Fig. \ref{fig:janus}\textbf{d}, alongside the Janus-generated beam rendered by FFT-validated stiffness and by local relative error. Fig. \ref{fig:janus}\textbf{e} confirms that actual volume fraction exactly tracks the target, and that axial stiffness closely agrees across the beam with a mean relative error (signed) of $0.1\%$ for the primary design objective $C_{1111}$. Fig. \ref{fig:janus}\textbf{f} confirms these modest local errors, with mean absolute error (MAE) of $2.57\%$, root mean squared error (RMSE) of $3.49\%$ and $R^2$ score of 0.994 for $C_{1111}$. Global energy distribution metrics show close agreement -- with a correlation of $0.999$ and $L^1$ error of 1.77\%. Crucially, tip deflection of the generated multiscale beam -- the macroscale quantity prescribing local stiffness objectives -- agrees to within 0.7\%.

\subsection*{Distributional sequence modelling of turbulence}

Many systems of interest are high-dimensional, stochastic and inherently distributional -- the objective being the characterisation of admissible futures, not just pointwise prediction. Leviathan \cite{batley2026leviathan} is a composite learning system that extends the SNA formalism to this regime, applying it to the distributional prediction of turbulence: a stringent test in which even short-horizon forecasts must represent ensembles of feasible future states.

Leviathan is evaluated on two-dimensional incompressible turbulence from the PDEBench suite \cite{takamoto2022pdebench, takamoto2022dataset}, simulated at Mach 0.1 with viscosity and dissipation parameters $\eta=10^{-8},\zeta=10^{-8}$ and periodic boundary conditions. The resulting fields are resolved on a $512\times512$ grid over 21 time steps. From each field, 64 non-overlapping $64\times64$ patches are extracted and treated as independent spatial streams, greatly amplifying the training corpus. The problem becomes one of learning local, translationally-invariant behaviour in open-boundary turbulent flow.

The predominant neural surrogates in this setting are deterministic operator learners. The Fourier Neural Operator \cite{li2021fno} learns mappings from $u(t)$ to $u(t+1)$ via pointwise regression. This approach is mirrored by DeepONet \cite{lu2021deeponet}, \emph{Separable} DeepONet \cite{mandl2025sepdeeponet} and Galerkin-based Transformer architectures \cite{Cao2021transformer}. Such pointwise-deterministic approaches achieve strong short-horizon accuracy and approximate the local evolution operator effectively.

In chaotic systems, however, neighbouring trajectories diverge exponentially under autoregressive rollout. Although governed by deterministic equations, chaotic evolution is effectively probabilistic for prediction as infinitesimal state uncertainty -- such as the floating-point noise floor -- inevitably progresses into macroscopic variability over time. Treating evolution as a deterministic mapping therefore imposes a misaligned inductive bias. In pointwise-regressive operator learning this renders long-horizon trajectories nonphysical; they effectively ‘‘fall off'' the attractor and fail to preserve inertial-range statistics and physical properties \cite{jiang2023training}. One such off-attractor failure mode manifests as mean-state drift, yielding biased climatological averages under autoregressive rollout in modelling weather systems \cite{scher2019weather}.

Leviathan instead learns a conditional distribution over admissible future states, an inductive bias better suited to the finite-precision reality of chaotic turbulence. Uncertainty becomes the primary modelling objective; Leviathan learns an ensemble of feasible next states conditioned on the prior. Herein lies the structural analogy: Leviathan treats chaotic spatiotemporal evolution no differently from linguistic autoregression, learning turbulence as a language in continuous embedding space. The manifold learned by Leviathan's embeddings thus represents emergent factorisability in the underlying dynamics. In exploiting this separability, Leviathan inaugurates a foundation-model paradigm for turbulence.

\begin{figure*}[!ht]
    \centering
    \includegraphics[width=\linewidth]{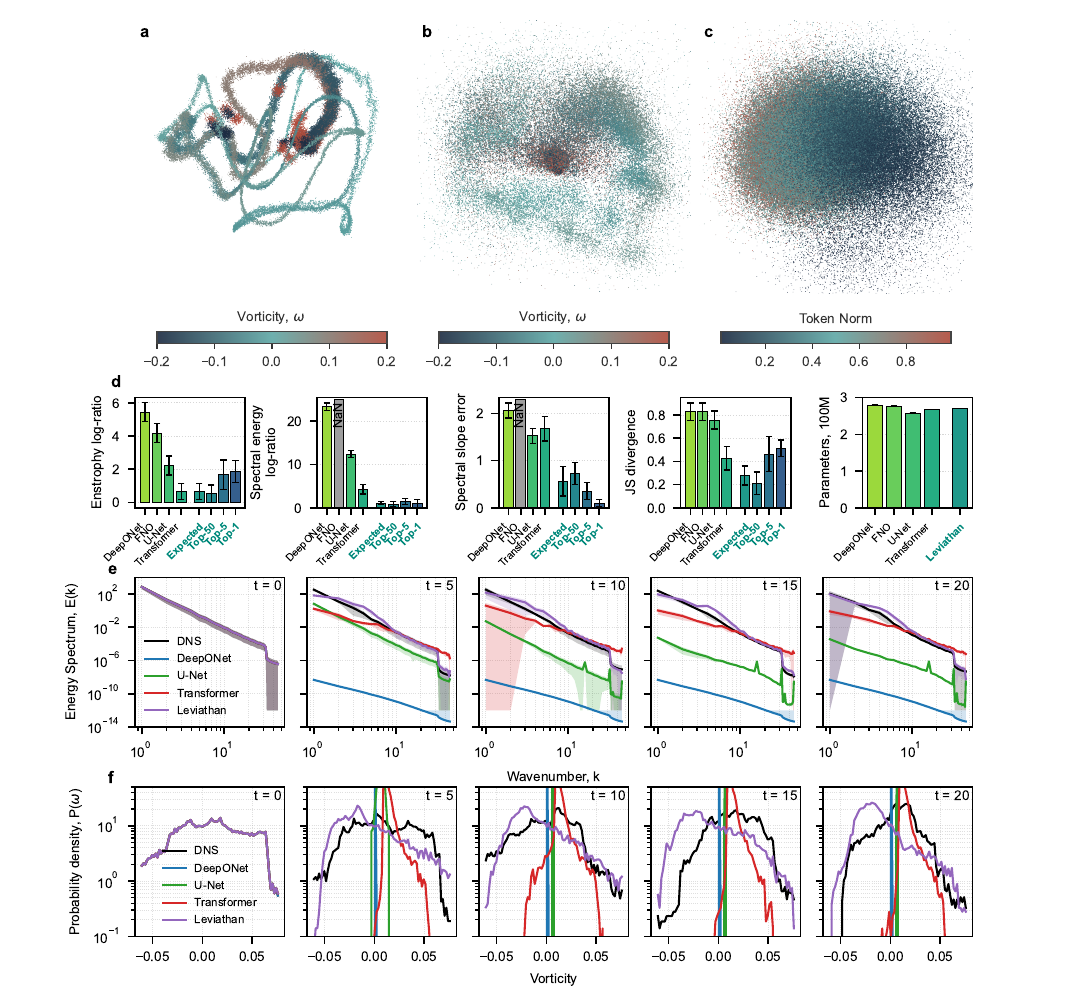}
    \caption{\textbf{Analysis of Leviathan as a foundation model for turbulence across three rollout seeds.} \textbf{a-c}, three-dimensional principal components of the embeddings of the entire vocabulary set. \textbf{a}, Leviathan generates a continuous embedding manifold of low intrinsic dimensionality, with the visualised components explaining $85\%$ of the variance. \textbf{b}, A dense Transformer embeds isotropically, explaining only $14\%$ of the variance. \textbf{c}, the isotropic cloud of Leviathan's embedding space when trained on the unstructured \texttt{o200k\_base} tokeniser. Despite the mathematical structure of quantised vorticity, the dense embedding space in \textbf{b} more closely resembles that of an unstructured language tokeniser. \textbf{d}, Quantitative validation of long-horizon -- 20 timestep -- physical consistency. Leviathan, under four sampling techniques (expectation, top-50, top-5, greedy) outperforms deterministic operators (DeepONet, Fourier neural operator, U-Net) across all metrics (left to right: enstrophy log-ratio error, total spectral energy log-ratio error, spectral slope error, Jensen--Shannon divergence) when controlling for parameters. The dense Transformer is  competitive in enstrophy and Jensen-Shannon divergence. \textbf{e}, Evolution of radial energy spectra in time, with Leviathan best maintaining inertial-range statistics. The deterministic operators rapidly fall away from the direct numerical simulation (DNS) ground truth. The Fourier neural operator fades to a constant field in a single step, with flat spectrum. \textbf{f}, Evolution of the probability density function $P(\omega)$ of vorticity. Deterministic models drift catastrophically to a non-physical mean state -- a delta distribution -- whereas Leviathan preserves the heavy-tailed structure of the chaotic attractor. The dense Transformer retains some structure, avoiding collapse to a mean state.}
    \label{fig:leviathan}
\end{figure*}

\begin{figure*}[!ht]
    \centering 
    \includegraphics[width=\linewidth]{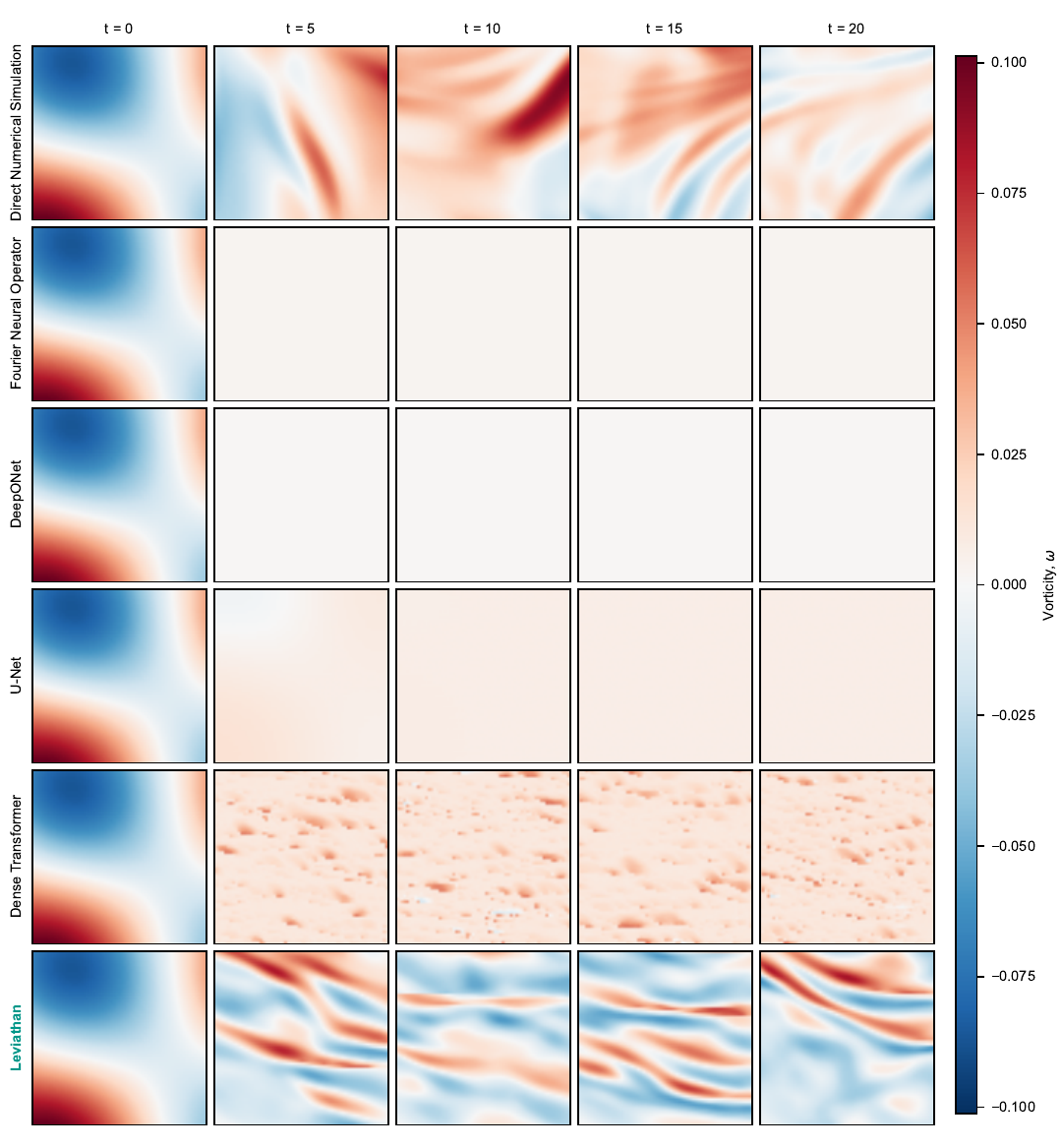}
    \caption{\textbf{Autoregressive rollout of turbulent flow over long horizons.} A comparative visualisation of two-dimensional incompressible turbulence, with ground truth generated via direct numerical simulation at Mach 0.1 and a Reynolds number of 10 million. The ground truth is compared against state-of-the-art deterministic operators (Fourier neural operator, DeepONet, U-Net), as well as a dense Transformer and Leviathan both sampled via expectation. The operator learners decay to a mean state; the dense Transformer preserves a degree of structure but is fundamentally handicapped by its embedding approach. Leviathan qualitatively tracks the ground truth.}
    \label{fig:rollout}
\end{figure*}

Leviathan achieves this through its generator module \cite{batley2026leviathan}, a neural token-embedding engine that base-decomposes tokens into coordinates  and maps them into a seeding space. In the following exposition, input vorticities are quantised to \texttt{uint16} precision and base-256-decomposed into a two-dimensional coordinate system then embedded into a 128-dimensional seeding space. The manifold formed by these intermediate embeddings is learned by a separable neural architecture and supplied to the Transformer backbone. This construction preserves neighbourhood relations: adjacent physical states remain adjacent in representation, unlike conventional static embeddings where neighbouring states occupy entirely unrelated slots.

Each Leviathan attention block uses a causality-respecting Prefix-LM mask \cite{raffel2023exploring, wang2022prefixlm} adapted to spatiotemporal fields. The prior state $p(t)$ is processed bidirectionally, seeing full spatial context, whereas the next state $p(t+1)$ is generated autoregressively: each token $p(t+1,i)$ attends to the full history $p(t)\cup p(t+1,<i)$ but remains masked from future tokens $p(t+1,\geq i)$. Crucially, the mask ensures that $p(t)$ does not attend to $p(t+1)$, preventing acausal information leakage across time. 

Training proceeds via maximisation of the conditional likelihood of the next state given the prior. Upon deployment, Leviathan is evaluated under free autoregression. Predictions are recursively sampled from the model distribution, exposing the long-horizon stability in chaotic evolution. The central test in this regime is whether generated trajectories remain physical -- on-attractor.  

Fig. \ref{fig:leviathan}\textbf{a} illustrates the embedding space of turbulent states by means of three-dimensional principal component analysis (PCA). Rather than the typical isotropic clouds -- typified in the embedding manifolds of both the dense Transformer (Fig. \ref{fig:leviathan}\textbf{b}) and the language-centric \texttt{o200k\_base} tokeniser (Fig. \ref{fig:leviathan}\textbf{c}) -- Leviathan offers instead a topology that is elongated, smooth and of low intrinsic dimensionality. These three components capture $85\%$ of the explained variance, compared with only $14\%$ for the dense transformer -- a quantitative confirmation of the structural distinction. By preserving adjacency -- from physical space to embedding space -- the separable primitive provides the model an inductive bias lacking in standard sequence models.

This structural advantage translates directly into long-horizon physical consistency. Under 20-step autoregressive rollouts, the tested state-of-the-art deterministic operators (DeepONet, Fourier neural operator (FNO) and U-Net) suffer catastrophic drift to nonphysical mean states, as evidenced by an array of metrics -- despite having parameter counts of the same order as Leviathan. Indeed, Fig. \ref{fig:leviathan}\textbf{d--e} illustrate the DeepONet and FNO having accumulated immense enstrophy log-ratio errors, total spectral energy log-ratio errors and spectral slope errors by $t=20$. The FNO, having decayed to a zero state, ‘‘falls off'' the spectral plots entirely. The U-Net preserves a small amount of spatial structure for longer, but nevertheless succumbs to the same drift-to-mean. Fig. \ref{fig:leviathan}\textbf{f} confirms this systemic failure, with the vorticity probability density functions of each of the three deterministic models rapidly collapsing to delta distributions, as well as in Fig. \ref{fig:rollout} with generated fields quickly flattening. Whilst the dense Transformer avoids this catastrophic collapse -- even accurately predicting enstrophy -- it remains fundamentally handicapped by its isotropic embedding approach. As is visualised in Fig. \ref{fig:rollout}, its generated fields degrade into unstructured noisy artefacts rather than exhibiting true turbulent behaviour. Without the physics-aware inductive bias of the separable primitive, these results suggest that sequence modelling alone does not suffice for a foundation model of turbulence.

By contrast, Leviathan (whose distribution is sampled via expected value) avoids these pathologies by design. As evidenced in Fig. \ref{fig:leviathan}\textbf{d}, Leviathan matches the ground truth enstrophy as well as the dense transformer, but is superior by all other metrics: conservation of spectral energy, spectral dissipation and Jensen-Shannon divergence of the vorticity probability density functions. Generated radial energy spectra closely follow those of the ground truth (Fig. \ref{fig:leviathan}\textbf{e}) and Fig. \ref{fig:leviathan}\textbf{f} shows that rollouts track the vorticity distributions of the ground truth field. Generated turbulent fields (Fig. \ref{fig:rollout}) qualitatively match the behaviour of the ground truth, sustaining distinct and coherently evolving vortex structures throughout the 20-step rollout. These results establish that treating chaotic spatiotemporal evolution as a distributional sequence modelling task -- facilitated by separable, neighbour-preserving embeddings -- effectively eliminates the off-attractor drift characteristic of deterministic pointwise operators. By exploiting an emergent factorisability of turbulent dynamics, Leviathan effects a division of computational labour: the separable primitive ensures efficient structural fidelity; global spatiotemporal reasoning is delegated to a monolithic Transformer backbone. This synergy validates our central hypothesis: composite architectures -- not pure monoliths -- are essential for grounding predictive intelligence.

\section*{Discussion}
The collective empirical performance of KHRONOS, SPAN, Janus, and Leviathan substantiates the central thesis of this work that separability is a latent property of intelligent systems, often emerging in the coordinates or representations through which the system is expressed. Whether as a standalone approximator, a variational trial space or a structured filter within a composite architecture, separable neural architectures enable the recovery of the underlying  manifold of a target system. By factorising high-dimensional states without a concomitant loss in expressivity, this formalism reconciles the continuity of physical law with the discrete nature of neural frameworks, providing a mathematical substrate for foundational models of physics.

The SNA is deployed as part of an MLP--SNA hybrid architecture, spline-based adaptive networks (SPAN). A dense layer learns how to best disentangle raw input streams to a low-rank latent space. This bears a conceptual resemblance to Koopman operator theory \cite{lusch2018koopman, brunton2016sindy}, wherein nonlinear dynamics are expressed in coordinates that admit simpler evolution operators. SPAN is integrated into actor and critic networks within a deep deterministic policy gradient (DDPG) \cite{lillicrap2019continuous, perezgil2022drlcarla} and soft actor-critic (SAC) frameworks for autonomous control. The SNA's inductive bias enforces smooth actor and critic mappings, whilst the factorised structure yields a better-conditioned action-value landscape. This stabilises policy gradients under the compounding demands of closed-loop control. Across online benchmarks spanning classical control, continuous MuJoCo locomotion and autonomous waypoint navigation in the CARLA simulator (see Appendix) \cite{dosovitskiy2017carla}, SPAN achieves $30-50\%$ improvements in sample efficiency and improved success rates ranging from $1.3-9\times$ over parameter-matched MLP baselines \cite{mostakim2026agile}. On offline expert datasets, SPAN outperforms the MLP baseline by an average factor of $6.7\times$. 

In the generative inversion of multiscale materials via Janus, an ablation confirms that the SNA head is indeed the critical driver of inversion quality. It outperforms a parameter-matched MLP baseline by $42-441\%$ in FFT-validated stiffness errors across design points (see Appendix). This advantage is mechanistic: the multilinear Jacobian of the SNA produces a better-conditioned loss landscape during inversion, reducing entrapment in the scattered local minima afflicting the more entangled MLP. Nevertheless, gradient hallucination -- the recovery of latent codes that ‘‘trick'' the predictor whilst diverging from true physics -- remains an open problem for both architectures; Janus mitigates it without resolution. Promising avenues include adversarial training of the predictive head, explicit Jacobian regularisation, physics-informed latent penalties and active learning. Resolving hallucination more fundamentally would reduce ensemble sizes required for confidence. It would also extend the approach to regimes poorly covered by training data, such as the high-porosity regime, where gradient fidelity is weakest.

Whilst the mathematical formalism of the separable neural architecture encompasses a broad representational class, empirical validations were deliberately restricted to a more fundamental instantiation (Canonical Polyadic structure with univariate B-spline atoms). That this instance is sufficient to achieve state-of-the-art performance across disparate fields underscores the generalisability of this inductive bias. Exploring higher-order interaction structures, alternative basis functions and more decompositions within this formalism remains a frontier for future inquiry. 

Despite these advantages, the practical implementation of this formalism presents open challenges. Whilst separability demonstrably emerges in the studied systems, identifying separable representations or -- more pertinently -- tokenisation schemes to expose this structure remains a non-trivial challenge. In particular, as made most explicit in Fig. \ref{fig:leviathan}\textbf{a}, Leviathan is particularly effective at extracting structure from continuous tokenisation schemes. The isotropic cloud seen in Fig. \ref{fig:leviathan}\textbf{c}, however, illustrates that arbitrary token indexing suppresses the neighbourhood structure upon which the separable primitive relies -- isolating tokenisation, not architecture, as the bottleneck for language. Even so, Leviathan demonstrates strong language modelling performance \cite{batley2026leviathan}: evaluation on the Pile dataset across the $60-420$M parameter regime yields $6.7-18.1\%$ reductions in perplexity. This is equivalent to the performance of a dense model up to twice Leviathan's size. The path forward is clear: a structure-aware tokenisation scheme capable of preserving neighbourhood relations in linguistic space. Such a scheme would expose the separability that the turbulence results confirm the primitive stands ready to exploit -- and that these results suggest lies latent in language itself.

\section*{Methods}
\label{sec:methods}
\paragraph{Preliminaries.}
For the present work, the ambient dimension is denoted $d\in\mathbb{N}$ with coordinates $[0,1]^d$. An interactivity hyperparameter $k\leq d$ is also introduced, confining the maximum order of featurewise interaction. For any subset $S\subseteq[d]$ with $|S|\leq k$, $x_S$ denotes the projection of $x$ onto $S$. A hyperparameter $r$ constrains interaction rank. Let $\rho:\mathbb{R}\rightarrow\mathbb{R}$ be an activation function. 

\paragraph{Foundational constituents.} The architecture is constructed from learnable functional components, herein termed \emph{atoms}. Formally, an atom is defined as the learnable function $\phi^{(S)}:[0,1]^{|S|}\times\Theta_S\rightarrow\mathbb{R}$, parameterised by $\theta_S$. Interactions between these atoms are subsequently governed by an \emph{interaction object} $C$; this is a collection of coefficients $c_S$ assigned to specific subsets of coordinates $S \subseteq [d]$. This object stores the abstract set of permissible featurewise interactions within the model. In particular, $C$ admits a canonical embedding into a $k$-sparse, order-$d$ interactivity tensor via the mapping $\mathcal{E}:\{c_S\}\rightarrow\mathcal{T}\in\mathbb{R}^{d\times\dots\times d}$. The \emph{rank} of this interactivity tensor $\mathcal{T}$ is strictly bounded by $r$, and it is nonzero only for subsets $|S|\leq k$.

\paragraph{Separable neural architecture.}
A Separable Neural Architecture (SNA), parameterised by $\Theta=\{\theta_S,c_S\}$, is a mapping $f:[0,1]^d\times\Theta\rightarrow\mathbb{R}$ defined as 
\begin{align}
    f(x;c_S;\theta_S)=\rho\left(\sum_{S\in\operatorname{Supp}(C)}c_S\phi^{(S)}(x_S;\theta_S)\right).
\end{align}
This function represents an element of the functional class
\begin{equation}
\begin{aligned}
    \mathcal{F}_{k,r}=\{f(x,\,&\Theta):\operatorname{rank}(\mathcal{E}(C))\leq r, \\
    &|S|\leq k, \forall S\in\operatorname{Supp}(C)\}.
\end{aligned}
\end{equation}
\paragraph{Recovery of model families.}
This definition generalises several contemporary model families based on the interactivity order $k$:
\begin{itemize}
    \item \textbf{Generalised additive models.} Recovered when $k=1$;
    \item \textbf{Generalised quadratic models.} Recovered when $k=2$;
    \item \textbf{Canonical-Polyadic-decomposed models.} Recovered when $k=d,~|S|=d$, atoms are products of univariate sub-atoms and $\operatorname{rank}(\mathcal{E}(C))\leq r$.
\end{itemize}

\paragraph{Tensorised classes.}

A subclass of SNAs, the tensor decomposition (TD) class is defined by setting $k=d$, thus $S=[d]$, and embedding the interaction object as a tensor decomposition with $C=\mathcal{E}^{-1}(\mathcal{T})$ and rank $r$. An element of this class takes the form 
\begin{align}
    f(x;\Phi_{TD})=\rho\left(\sum_{j=1}^rc^{(j)}\phi^{(j)}(x; \theta^{(j)})\right).
\end{align}
This class specialises further into its most fundamental subclass, the Canonical Polyadic (CP)-class. Here atoms are restricted to be products of univariate sub-atoms $\psi^{(j)}_i$. An element of this class is written
\begin{align}
    f(x;\Phi_{CP})=\rho\left(\sum_{j=1}^rc^{(j)}\prod_{i=1}^d\psi_i^{(j)}(x_i; \theta_i^{(j)})\right).
\end{align}
Crucially, this structural restriction does not forfeit expressivity. Consider the functional class of CP-SNAs $\mathcal{F}^r_{CP}$ under identity activation $\rho(x)=x$. Then, if each sub-atom $\psi^{(j)}_i$ is continuous on the unit segment, the union over finite ranks $\mathcal{F}=\bigcup_{r=1}^\infty\mathcal{F}^r_{CP}$ is uniformly convergent. That is to say $\mathcal{F}$ is dense in $C\left([0,1]^d\right)$ with respect to the infinity norm $\|\cdot\|_\infty$. $L_p$-convergence, that $\mathcal{F}$ is dense in $L_p\left([0,1]^d\right)$ for any $p\geq1$, immediately follows.

As the CP decomposition is the most fundamental tensor decomposition, it follows as corollary that TD-class SNAs are also dense in $C\left([0,1]^d\right)$. This includes the Tucker \cite{tucker1966} and Tensor train \cite{oseledets2011} decompositions, amongst others \cite{kolda2009}. Notably, then, every target function can be approximated arbitrarily well by some TD-class SNA with finite rank $r$.

\paragraph{Variational domains and trial spaces.}

In extending this formalism to variational problems, let $\Omega \subset \mathbb{R}^n$ be an $n$-dimensional spatial domain, $[0, T]$ a temporal interval and $\mathcal{P} \subset \mathbb{R}^m$ a parametric space. The combined domain is the Cartesian product $\mathcal{X} = \Omega \times [0, T] \times \mathcal{P}$, with ambient dimension $d = n + 1 + m$. Coordinates are indexed $i \in \{1, \dots, d\}$ covering space, time and parameters respectively. The separable trial space is the Hilbert tensor product $V=\bigotimes_{i=1}^dV^{(i)}$. For a semi-linear form $a:V\times V\rightarrow\mathbb{R}$ and linear functional $\ell:V\rightarrow\mathbb{R}$, $u\in V$ is sought such that $a(u,v)=\ell(v)$ for all $v\in V$.

\paragraph{Variational separable neural architecture.}

A variational separable neural architecture (VSNA), parameterised by $\Theta$, is a trial function $u \in V$ defined as
\begin{equation}
    u(x; c_S; \theta_S) = \sum_{S\in\operatorname{Supp}(C)} c_S \phi^{(S)}(x_S;\theta_S)
\end{equation}
where each atom $\phi^{(S)} \in V^{(S)} = \bigotimes_{i \in S} V^{(i)}$ respects the local variational structure of its coordinates. This trial function represents an element of the finite-dimensional approximation subspace $\mathcal{F}_{k,r} = \left\{ u(x, \Theta) \in V : \operatorname{rank}(\mathcal{E}(C)) \leq r,\ |S| \leq k,\forall S\in\operatorname{Supp}(C)\right\}$.

\paragraph{Variational tensorised classes.}

In a spatiotemporal--parametric domain $\mathcal{X}$ with ambient dimension $d$, let the variational trial space be written $V = \bigotimes_{i=1}^d V^{(i)}$, with each coordinate $x_i$ associated with the univariate functional space $V^{(i)}$. Then the class of CP-VSNA trial functions is defined by:
\begin{equation}
\begin{aligned}
    \mathcal{F}_{r} = \biggl\{ u(x; \theta) = &\sum_{j=1}^r \prod_{i=1}^d \psi_i^{(j)}(x_i; \theta) \\
    &\bigg| \;\psi_i^{(j)} \in V^{(i)},\, \theta \in \Theta \biggr\}
\end{aligned}
\end{equation}
given a learnable parameter set $\Theta$. Thus, $\mathcal{F}_{r} \subset V$ serves as the finite-dimensional approximation subspace for a Galerkin method utilizing SNA-ansatz functions.

\paragraph{Variational guarantees.}

To establish the classical validity of this trial space, let $a: V \times V \rightarrow \mathbb{R}$ be a bounded and coercive bilinear form with coercivity constant $c_0 > 0$ and boundedness constant $c_1 > 0$. Denoting a linear functional $\ell \in V^*$, and fixing the basis sub-atoms $\psi_i^{(j)}$, the VSNA formalism satisfies four core variational guarantees:
\begin{itemize}
    \item \textbf{Well-posedness:} The Galerkin approximation, $a(u_{r}, v_{r}) = \ell(v_{r})$ for all $v_{r} \in \mathcal{F}_{r}$, admits a unique solution $u_{r} \in \mathcal{F}_{r}$.
    \item \textbf{Quasi-optimality:} Let $u \in V$ be the unique solution to the exact weak problem. The VSNA Galerkin solution $u_{r}$ is quasi-optimal, bounded strictly by the best approximation within the trial space: $\|u - u_{r}\|_V \leq \frac{c_1}{c_0} \min_{v_{r} \in \mathcal{F}_{r}} \|u - v_{r}\|_V$.
    \item \textbf{Convergence:} If each univariate sub-atom family $\psi^{(j)}_i(\cdot~; \theta)$ is dense in $V^{(i)}$, then $\bigcup_r\mathcal{F}_{r}$ is dense in $V$ with respect to the Hilbert norm $\|\cdot\|_V$. Consequently, as the interaction rank $r \to \infty$, the approximation error $\varepsilon_r \to 0$, ensuring VSNA Galerkin solutions converge to the exact solution $u \in V$.
    \item \textbf{Stability:} The VSNA Galerkin solution $u_{r}$ satisfies the absolute stability bound $\|u_{r}\|_V \leq \frac{1}{c_0} \|\ell\|_{V^*}$, where the dual norm is defined as $\|\ell\|_{V^*} = \sup_{v\in V, v\neq 0} \frac{\ell(v)}{\|v\|_V}$.
\end{itemize}
Taken together, these guarantees prove that for any sufficiently regular spatiotemporal--parametric problem, the VSNA forms a well-posed, quasi-optimal, stable and convergent trial space.

\section*{Declarations}

\begin{itemize}
\item Funding

S. Saha gratefully acknowledges start up funding by Kevin T Crofton Department of Aerospace and Ocean Engineering.

\item Data availability 

The Inconel 718 \cite{xie2021mechanistic}, L-BOM \cite{wang2026lbom}, PDEBench \cite{takamoto2022pdebench} and Sketch-to-stress \cite{yu2024sketch2stress} datasets are all publicly available via their respective publications. 
\item Code availability 

Code will be made publicly available upon publication.
\item Author contributions

R.T.B.: Conceptualisation, Methodology, Formal analysis, Resources, Software (KHRONOS, VSNA, Janus, Leviathan), Investigation, Visualisation, Writing -- original draft, Writing -- review \& editing.

A.S.: Software (Sketch-to-stress adaptation of Janus), Investigation, Visualisation, Writing -- review \& editing.

R.M.: Software (SPAN), Investigation, Writing -- review \& editing.

A.K.: Software (Inconel 718 adaptation of KHRONOS), Investigation.

S.S.: Conceptualisation, Resources, Supervision, Writing -- review \& editing.
\end{itemize}

\noindent

\bibliographystyle{unsrtnat}
\bibliography{sn-bibliography}%

@misc{batley2025khronos,
      title={KHRONOS: a Kernel-Based Neural Architecture for Rapid, Resource-Efficient Scientific Computation}, 
      author={Reza T. Batley and Sourav Saha},
      year={2025},
      eprint={2505.13315},
      archivePrefix={arXiv},
      primaryClass={cs.LG},
      url={https://arxiv.org/abs/2505.13315}, 
}

@article{xie2021mechanistic,
  title={Mechanistic data-driven prediction of as-built mechanical properties in metal additive manufacturing},
  author={Xie, X. and Bennett, J. and Saha, S. and others},
  journal={npj Computational Materials},
  volume={7},
  number={1},
  pages={86},
  year={2021},
  publisher={Nature Publishing Group},
  doi={10.1038/s41524-021-00555-z},
  url={https://doi.org/10.1038/s41524-021-00555-z}
}

@article{fang2022data,
  title={Data-driven analysis of process, structure, and properties of additively manufactured Inconel 718 thin walls},
  author={Fang, L. and Cheng, L. and Glerum, J. A. and others},
  journal={npj Computational Materials},
  volume={8},
  number={1},
  pages={126},
  year={2022},
  publisher={Nature Publishing Group},
  doi={10.1038/s41524-022-00808-5},
  url={https://doi.org/10.1038/s41524-022-00808-5}
}

@article{park2025unifying,
  author    = {Park, Chanwook and Saha, Sourav and Guo, Jiachen and Zhang, Hantao and Xie, Xiaoyu and Bessa, Miguel A. and Qian, Dong and Chen, Wei and Wanger, Gregory J. and Cao, Jian and Hughes, Thomas J. R. and Liu, Wing Kam},
  title     = {Unifying machine learning and interpolation theory via interpolating neural networks},
  journal   = {Nature Communications},
  year      = {2025},
  volume    = {16},
  number    = {1},
  pages     = {8753},
  doi       = {10.1038/s41467-025-63790-8},
  issn      = {2041-1723}
}

@inbook{batley2026janus,
    author = {Reza T. Batley and Sourav Saha},
    title = {A Unified Generative-Predictive Framework for Deterministic Inverse Design},
    publisher = {AIAA SciTech Forum},
    year = {2026},
    chapter = {},
    pages = {},
    doi = {10.2514/6.2026-0365},
    URL = {https://arc.aiaa.org/doi/abs/10.2514/6.2026-0365},
}

@inbook{sarker2026khronos,
    author = {Apurba Sarker and Reza T. Batley and Darshan Sarojini and Sourav Saha},
    title = {A Kernel-based Resource-efficient Neural Surrogate for Multi-fidelity Prediction of Aerodynamic Field},
    publisher = {AIAA SciTech Forum},
    year = {2026},
    chapter = {},
    pages = {},
    doi = {10.2514/6.2026-0043},
    URL = {https://arc.aiaa.org/doi/abs/10.2514/6.2026-0043},
}

@misc{mostakim2026agile,
      title={Agile Reinforcement Learning through Separable Neural Architecture}, 
      author={Rajib Mostakim and Reza T. Batley and Sourav Saha},
      year={2026},
      eprint={2601.23225},
      archivePrefix={arXiv},
      primaryClass={cs.LG},
      url={https://arxiv.org/abs/2601.23225}, 
}

@misc{batley2026leviathan,
      title={A Separable Architecture for Continuous Token Representation in Language Models}, 
      author={Reza T. Batley and Sourav Saha},
      year={2026},
      eprint={2601.22040},
      archivePrefix={arXiv},
      primaryClass={cs.CL},
      url={https://arxiv.org/abs/2601.22040}, 
}

@inproceedings{takamoto2022pdebench,
    author = {Takamoto, Makoto and Praditia, Timothy and Leiteritz, Raphael and MacKinlay, Daniel and Alesiani, Francesco and Pfl\"{u}ger, Dirk and Niepert, Mathias},
    booktitle = {Advances in Neural Information Processing Systems},
    pages = {1596--1611},
    publisher = {Curran Associates, Inc.},
    title = {PDEBench: An Extensive Benchmark for Scientific Machine Learning},
    url = {https://proceedings.neurips.cc/paper_files/paper/2022/file/0a9747136d411fb83f0cf81820d44afb-Paper-Datasets_and_Benchmarks.pdf},
    volume = {35},
    year = {2022}
}

@data{takamoto2022dataset,
author = {Takamoto, Makoto and Praditia, Timothy and Leiteritz, Raphael and MacKinlay, Dan and Alesiani, Francesco and Pflüger, Dirk and Niepert, Mathias},
publisher = {DaRUS},
title = {{PDEBench Datasets}},
year = {2022},
version = {V8},
doi = {10.18419/DARUS-2986},
url = {https://doi.org/10.18419/DARUS-2986}
}

@misc{li2021fno,
      title={Fourier Neural Operator for Parametric Partial Differential Equations}, 
      author={Zongyi Li and Nikola Kovachki and Kamyar Azizzadenesheli and Burigede Liu and Kaushik Bhattacharya and Andrew Stuart and Anima Anandkumar},
      year={2021},
      eprint={2010.08895},
      archivePrefix={arXiv},
      primaryClass={cs.LG},
      url={https://arxiv.org/abs/2010.08895}, 
}

@article{lu2021deeponet,
   title={Learning nonlinear operators via DeepONet based on the universal approximation theorem of operators},
   volume={3},
   ISSN={2522-5839},
   url={http://dx.doi.org/10.1038/s42256-021-00302-5},
   DOI={10.1038/s42256-021-00302-5},
   number={3},
   journal={Nature Machine Intelligence},
   publisher={Springer Science and Business Media LLC},
   author={Lu, Lu and Jin, Pengzhan and Pang, Guofei and Zhang, Zhongqiang and Karniadakis, George Em},
   year={2021},
   month=mar, pages={218–229} }

@article{mandl2025sepdeeponet,
    title = {Separable physics-informed DeepONet: Breaking the curse of dimensionality in physics-informed machine learning},
    journal = {Computer Methods in Applied Mechanics and Engineering},
    volume = {434},
    pages = {117586},
    year = {2025},
    issn = {0045-7825},
    doi = {https://doi.org/10.1016/j.cma.2024.117586},
    url = {https://www.sciencedirect.com/science/article/pii/S0045782524008405},
    author = {Luis Mandl and Somdatta Goswami and Lena Lambers and Tim Ricken}
}

@inproceedings{jiang2023training,
    author = {Jiang, Ruoxi and Lu, Peter Y. and Orlova, Elena and Willett, Rebecca},
    title = {Training neural operators to preserve invariant measures of chaotic attractors},
    year = {2023},
    publisher = {Curran Associates Inc.},
    address = {Red Hook, NY, USA},
    booktitle = {Proceedings of the 37th International Conference on Neural Information Processing Systems},
    articleno = {1202},
    numpages = {25},
    location = {New Orleans, LA, USA},
    series = {NIPS '23}
}

@Article{scher2019weather,
    AUTHOR = {Scher, S. and Messori, G.},
    TITLE = {Weather and climate forecasting with neural networks: using general circulation models (GCMs) with different complexity as a study ground},
    JOURNAL = {Geoscientific Model Development},
    VOLUME = {12},
    YEAR = {2019},
    NUMBER = {7},
    PAGES = {2797--2809},
    URL = {https://gmd.copernicus.org/articles/12/2797/2019/},
    DOI = {10.5194/gmd-12-2797-2019}
}

@inproceedings{Cao2021transformer,
    author        = {Shuhao Cao},
    title         = {Choose a Transformer: {F}ourier or {G}alerkin},
    booktitle     = {Advances in Neural Information Processing Systems (NeurIPS 2021)},
    volume        = {34},
    year          = {2021},
    eprint        = {arXiv: 2105.14995},
    primaryclass  = {cs.CL},
    url={https://openreview.net/forum?id=ssohLcmn4-r},
}

@misc{wang2022prefixlm,
      title={What Language Model Architecture and Pretraining Objective Work Best for Zero-Shot Generalization?}, 
      author={Thomas Wang and Adam Roberts and Daniel Hesslow and Teven Le Scao and Hyung Won Chung and Iz Beltagy and Julien Launay and Colin Raffel},
      year={2022},
      eprint={2204.05832},
      archivePrefix={arXiv},
      primaryClass={cs.CL},
      url={https://arxiv.org/abs/2204.05832}, 
}

@article{raissi2019pinn,
    title = {Physics-informed neural networks: A deep learning framework for solving forward and inverse problems involving nonlinear partial differential equations},
    journal = {Journal of Computational Physics},
    volume = {378},
    pages = {686--707},
    year = {2019},
    issn = {0021-9991},
    doi = {https://doi.org/10.1016/j.jcp.2018.10.045},
    url = {https://www.sciencedirect.com/science/article/pii/S0021999118307125},
    author = {M. Raissi and P. Perdikaris and G.E. Karniadakis},
}

@article{Chinesta2011,
  author  = {Chinesta, Francisco and Ladevèze, Pierre and Cueto, Elías},
  title   = {A Short Review on Model Order Reduction Based on Proper Generalized Decomposition},
  journal = {Archives of Computational Methods in Engineering},
  year    = {2011},
  volume  = {18},
  number  = {4},
  pages   = {395--404},
  doi     = {10.1007/s11831-011-9064-7},
  url     = {https://doi.org/10.1007/s11831-011-9064-7}
}

@inproceedings{chen2016xgboost, series={KDD ’16},
   title={XGBoost: A Scalable Tree Boosting System},
   url={http://dx.doi.org/10.1145/2939672.2939785},
   DOI={10.1145/2939672.2939785},
   booktitle={Proceedings of the 22nd ACM SIGKDD International Conference on Knowledge Discovery and Data Mining},
   publisher={ACM},
   author={Chen, Tianqi and Guestrin, Carlos},
   year={2016},
   month=aug, pages={785–794},
   collection={KDD ’16} }

@article{bendsoe1999material,
  author  = {Bends{\o}e, Martin P. and Sigmund, Ole},
  title   = {Material Interpolation Schemes in Topology Optimization},
  journal = {Archive of Applied Mechanics},
  year    = {1999},
  volume  = {69},
  number  = {9-10},
  pages   = {635--654},
  doi     = {10.1007/s004190050248}
}

@article{fleck2010micro,
  author  = {Fleck, Norman A. and Deshpande, Vikram S. and Ashby, Michael F.},
  title   = {Micro-architectured Materials: Past, Present and Future},
  journal = {Proceedings of the Royal Society A: Mathematical, Physical and Engineering Sciences},
  year    = {2010},
  volume  = {466},
  number  = {2121},
  pages   = {2495--2516},
  doi     = {10.1098/rspa.2010.0215}
}

@article{bendsoe1988generating,
  author  = {Bends{\o}e, Martin P. and Kikuchi, Noboru},
  title   = {Generating Optimal Topologies in Structural Design Using a Homogenization Method},
  journal = {Computer Methods in Applied Mechanics and Engineering},
  year    = {1988},
  volume  = {71},
  number  = {2},
  pages   = {197--224},
  doi     = {10.1016/0045-7825(88)90086-2}
}

@article{rodrigues2002hierarchical,
  author  = {Rodrigues, H. and Fernandes, P. and Guedes, J. M.},
  title   = {A Hierarchical Optimization of Material and Structure},
  journal = {Structural and Multidisciplinary Optimization},
  year    = {2002},
  volume  = {24},
  number  = {1},
  pages   = {1--10},
  doi     = {10.1007/s00158-002-0209-z}
}

@article{feyel2000fe2,
  author  = {Feyel, F. and Chaboche, J.-L.},
  title   = {FE2 Multiscale Approach for Modelling the Elastoviscoplastic Behaviour of Long Fibre SiC/Ti Composite Materials},
  journal = {Computer Methods in Applied Mechanics and Engineering},
  year    = {2000},
  volume  = {183},
  number  = {3-4},
  pages   = {309--330},
  doi     = {10.1016/S0045-7825(99)00224-8}
}

@article{panetta2015elastic,
  author  = {Panetta, Julian and Zhou, Qingnan and Malomo, Luigi and Pietroni, Nico and Cignoni, Paolo and Zorin, Denis},
  title   = {Elastic Textures for Additive Fabrication},
  journal = {ACM Transactions on Graphics},
  year    = {2015},
  volume  = {34},
  number  = {4},
  pages   = {135:1--135:12},
  doi     = {10.1145/2766937}
}

@article{wang2026lbom,
  author  = {Wang, Lili and Feng, Jingxuan and Zhai, Xiaoya and Han, Jiacheng and Chen, Kai and Ma, Winston Wai Shing and Liu, Ligang and Fu, Xiao-Ming},
  title   = {Data-driven inverse design of multifunctional bicontinuous multiscale structures},
  journal = {Nature Communications},
  year    = {2026},
  volume  = {17},
  number  = {1},
  doi     = {10.1038/s41467-025-68089-2}
}

@article{wang2020deep,
  author  = {Wang, Wenjie and Yu, Xiaoyu and Zheng, Xian and others},
  title   = {Deep Generative Modeling for Mechanistic-Based Learning and Design of Metamaterial Systems},
  journal = {Computer Methods in Applied Mechanics and Engineering},
  year    = {2020},
  volume  = {372},
  pages   = {113377},
  doi     = {10.1016/j.cma.2020.113377}
}

@article{bastek2022inverting,
  author  = {Bastek, Jan H. and Kochmann, Dennis M. and others},
  title   = {Inverting the Structure--Property Map of Truss Metamaterials by Deep Learning},
  journal = {Extreme Mechanics Letters},
  year    = {2022},
  volume  = {53},
  pages   = {101700},
  doi     = {10.1016/j.eml.2022.101700}
}

@article{garner2019compatibility,
  author  = {Garner, Evan and Kolken, H. M. A. and Wang, Chen and Zadpoor, Amir A. and Wu, Jun},
  title   = {Compatibility in Microstructural Optimization for Additive Manufacturing},
  journal = {Additive Manufacturing},
  year    = {2019},
  volume  = {28},
  pages   = {425--434},
  doi     = {10.1016/j.addma.2019.05.021}
}

@article{zheng2021datadriven,
  author  = {Zheng, Xiaoyu and Yu, Xiaoyu and Wang, Wenjie and others},
  title   = {Data-driven Multiscale Design of Cellular Materials with Tailored Mechanical Properties},
  journal = {Computer Methods in Applied Mechanics and Engineering},
  year    = {2021},
  volume  = {380},
  pages   = {113782},
  doi     = {10.1016/j.cma.2021.113782}
}

@inproceedings{bora2017compressed,
  author    = {Bora, Ashish and Jalal, Ajil and Price, Eric and Dimakis, Alexandros G.},
  title     = {Compressed Sensing Using Generative Models},
  booktitle = {Proceedings of the 34th International Conference on Machine Learning},
  year      = {2017},
  pages     = {537--546}
}

@inproceedings{yeh2017semantic,
  author    = {Yeh, Raymond A. and Chen, Chen and Lim, Teck Yian and Hasegawa-Johnson, Mark and Do, Minh N. and Pfister, Hanspeter},
  title     = {Semantic Image Inpainting with Deep Generative Models},
  booktitle = {Proceedings of the IEEE Conference on Computer Vision and Pattern Recognition},
  year      = {2017},
  pages     = {5485--5493},
  doi       = {10.1109/CVPR.2017.579}
}

@misc{dosovitskiy2017carla,
      title={CARLA: An Open Urban Driving Simulator}, 
      author={Alexey Dosovitskiy and German Ros and Felipe Codevilla and Antonio Lopez and Vladlen Koltun},
      year={2017},
      eprint={1711.03938},
      archivePrefix={arXiv},
      primaryClass={cs.LG},
      url={https://arxiv.org/abs/1711.03938}, 
}

@misc{lillicrap2019continuous,
      title={Continuous control with deep reinforcement learning}, 
      author={Timothy P. Lillicrap and Jonathan J. Hunt and Alexander Pritzel and Nicolas Heess and Tom Erez and Yuval Tassa and David Silver and Daan Wierstra},
      year={2019},
      eprint={1509.02971},
      archivePrefix={arXiv},
      primaryClass={cs.LG},
      url={https://arxiv.org/abs/1509.02971}, 
}

@article{perezgil2022drlcarla,
  author  = {P{\'e}rez-Gil, {\'O}scar and Barea, Rafael and L{\'o}pez-Guill{\'e}n, Elena and Bergasa, Luis M. and G{\'o}mez-Hu{\'e}lamo, Carlos and Guti{\'e}rrez, Rodrigo and D{\'i}az-D{\'i}az, Alejandro},
  title   = {Deep reinforcement learning based control for autonomous vehicles in CARLA},
  journal = {Multimedia Tools and Applications},
  year    = {2022},
  volume  = {81},
  number  = {3},
  pages   = {3553--3576},
  doi     = {10.1007/s11042-021-11437-3}
}

@article{lusch2018koopman,
  author  = {Lusch, Bethany and Kutz, J. Nathan and Brunton, Steven L.},
  title   = {Deep learning for universal linear embeddings of nonlinear dynamics},
  journal = {Nature Communications},
  year    = {2018},
  volume  = {9},
  number  = {1},
  pages   = {4950},
  doi     = {10.1038/s41467-018-07210-0}
}

@article{brunton2016sindy,
  author  = {Brunton, Steven L. and Proctor, Joshua L. and Kutz, J. Nathan},
  title   = {Discovering governing equations from data by sparse identification of nonlinear dynamical systems},
  journal = {Proceedings of the National Academy of Sciences},
  year    = {2016},
  volume  = {113},
  number  = {15},
  pages   = {3932--3937},
  doi     = {10.1073/pnas.1517384113}
}

@misc{liu2025kan,
      title={KAN: Kolmogorov-Arnold Networks}, 
      author={Ziming Liu and Yixuan Wang and Sachin Vaidya and Fabian Ruehle and James Halverson and Marin Soljačić and Thomas Y. Hou and Max Tegmark},
      year={2025},
      eprint={2404.19756},
      archivePrefix={arXiv},
      primaryClass={cs.LG},
      url={https://arxiv.org/abs/2404.19756}, 
}

@article{saha2021hidenn,
  title   = {Hierarchical deep learning neural network (HiDeNN): An artificial intelligence (AI) framework for computational science and engineering},
  author  = {Saha, Sourav and Gan, Zhengtao and Cheng, Lin and Gao, Jiaying and Kafka, Orion L. and Xie, Xiaoyu and Li, Hengyang and Tajdari, Mahsa and Kim, H. Alicia and Liu, Wing Kam},
  journal = {Computer Methods in Applied Mechanics and Engineering},
  volume  = {373},
  pages   = {113452},
  year    = {2021},
  doi     = {10.1016/j.cma.2020.113452}
}

@article{zhang2022hidenntd,
    title = {HiDeNN-TD: Reduced-order hierarchical deep learning neural networks},
    journal = {Computer Methods in Applied Mechanics and Engineering},
    volume = {389},
    pages = {114414},
    year = {2022},
    issn = {0045-7825},
    doi = {https://doi.org/10.1016/j.cma.2021.114414},
    url = {https://www.sciencedirect.com/science/article/pii/S0045782521006629},
    author = {Lei Zhang and Ye Lu and Shaoqiang Tang and Wing Kam Liu}
}

@article{yu2024sketch2stress,
    author = {Deng Yu and Chufeng Xiao and Manfred Lau and Hongbo Fu},
    title = {Sketch2Stress: Sketching with Structural Stress Awareness},
    journal = {IEEE Transactions on Visualization and Computer Graphics},
    year = {2024},
    volume = {30},
    number = {10},
    pages = {6851--6865},
    doi = {10.1109/TVCG.2023.3342119},
    URL = {https://doi.org/10.1109/TVCG.2023.3342119},
}

@article{tucker1966,
  title={Some mathematical notes on three-mode factor analysis},
  author={Tucker, Ledyard R},
  journal={Psychometrika},
  volume={31},
  number={3},
  pages={279--311},
  year={1966},
  publisher={Springer}
}

@article{kolda2009,
  title={Tensor decompositions and applications},
  author={Kolda, Tamara G and Bader, Brett W},
  journal={SIAM Review},
  volume={51},
  number={3},
  pages={455--500},
  year={2009},
  publisher={SIAM}
}

@article{oseledets2011,
  title={Tensor-train decomposition},
  author={Oseledets, Ivan V},
  journal={SIAM Journal on Scientific Computing},
  volume={33},
  number={5},
  pages={2295--2317},
  year={2011},
  publisher={SIAM}
}

@misc{goodfellow2015explaining,
      title={Explaining and Harnessing Adversarial Examples}, 
      author={Ian J. Goodfellow and Jonathon Shlens and Christian Szegedy},
      year={2015},
      eprint={1412.6572},
      archivePrefix={arXiv},
      primaryClass={stat.ML},
      url={https://arxiv.org/abs/1412.6572}, 
}

@misc{raffel2023exploring,
      title={Exploring the Limits of Transfer Learning with a Unified Text-to-Text Transformer}, 
      author={Colin Raffel and Noam Shazeer and Adam Roberts and Katherine Lee and Sharan Narang and Michael Matena and Yanqi Zhou and Wei Li and Peter J. Liu},
      year={2023},
      eprint={1910.10683},
      archivePrefix={arXiv},
      primaryClass={cs.LG},
      url={https://arxiv.org/abs/1910.10683}, 
}

@book{piegl1997nurbs,
    author = {Piegl, Les and Tiller, Wayne},
    title = {The NURBS Book},
    publisher = {Springer},
    year = {1997},
    edition = {2},
    series = {Monographs in Visual Communication},
    doi = {10.1007/978-3-642-59223-2}
}

@book{vallis2017atmospheric,
    title={Atmospheric and oceanic fluid dynamics: fundamentals and large-scale circulation},
    author={Vallis, Geoffrey K},
    year={2017},
    publisher={Cambridge University Press}
}

@article{breiman2001random,
  title={Random forests},
  author={Breiman, Leo},
  journal={Machine learning},
  volume={45},
  pages={5--32},
  year={2001},
  publisher={Springer}
}

\appendix

\onecolumn

\section{Universality of tensor decomposed separable neural architectures}

\begin{theorem}[Universal approximation of CP-separable neural architectures]
\label{thm:cpuniv}
Let $\Omega=[0,1]^d$. For each coordinate $i\in\{1,\dots,d\}$, let $U^{(i)}\subset C([0,1])$
be a unital subalgebra that is dense in $C([0,1])$ with respect to $\|\cdot\|_\infty$.
Define
\begin{align}
    \mathcal A = \left\{\sum_{j=1}^r \prod_{i=1}^d a^{(i)}_j(x_i): a^{(i)}_j\in U^{(i)}, r\in\mathbb N \right\}.
\end{align}
Then $\mathcal A$ is dense in $C([0,1]^d)$ with respect to the uniform norm.
\end{theorem}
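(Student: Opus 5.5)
The plan is to apply the Stone--Weierstrass theorem to $\mathcal A$, regarded as a subset of $C([0,1]^d)$ over the compact Hausdorff space $[0,1]^d$. Three properties must be checked: that $\mathcal A$ is a subalgebra, that it contains the constants, and that it separates points.

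\emph{Algebra structure.} Closure under addition is immediate, since concatenating two finite sums of rank-one products gives another such sum with rank $r+r'$. Closure under scalar multiplication follows by absorbing the scalar into one factor $a^{(1)}_j$, using that $U^{(1)}$ is a vector space. For closure under multiplication, I expand a product of two finite sums bilinearly. Each resulting term has the form $\prod_i a^{(i)}_j(x_i)\,\prod_i b^{(i)}_l(x_i)=\prod_i \bigl(a^{(i)}_j b^{(i)}_l\bigr)(x_i)$. Since each $U^{(i)}$ is a subalgebra, $a^{(i)}_j b^{(i)}_l\in U^{(i)}$, so the product is again in $\mathcal A$ with rank $rr'$.

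\emph{Constants.} Each $U^{(i)}$ is unital, so $1\in U^{(i)}$. Taking $r=1$ and all factors equal to $1$ shows $1\in\mathcal A$, and hence every constant lies in $\mathcal A$.

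\emph{Point separation.} Let $x\neq y$ in $[0,1]^d$. Then some coordinate satisfies $x_k\neq y_k$. The main point is to find $a\in U^{(k)}$ with $a(x_k)\neq a(y_k)$. The cleanest route uses density: the function $t\mapsto t$ lies in $C([0,1])$, so there is $a\in U^{(k)}$ with $\|a-t\|_\infty<|x_k-y_k|/2$. This gives $|a(x_k)-a(y_k)|>|x_k-y_k|-|x_k-y_k|=0$. The function $g(z)=a(z_k)\prod_{i\neq k}1$ belongs to $\mathcal A$ and separates $x$ from $y$.

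Stone--Weierstrass (real version) then shows that the closure of $\mathcal A$ is all of $C([0,1]^d)$ in $\|\cdot\|_\infty$.

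I expect no step to be a genuine obstacle. The only point needing care is the precise hypothesis used for separation, and density of $U^{(k)}$ supplies it through the estimate above. I would also check that every element of $\mathcal A$ is indeed continuous on $[0,1]^d$, which holds because it is a finite sum of products of continuous functions composed with coordinate projections.
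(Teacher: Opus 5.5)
Your proposal is correct and follows essentially the same route as the paper: check that $\mathcal A$ is a unital subalgebra of $C([0,1]^d)$ that separates points, then apply Stone--Weierstrass. The one difference is that you justify point separation explicitly, by approximating $t\mapsto t$ within $|x_k-y_k|/2$; the paper asserts that step directly from the density of $U^{(i)}$.
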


\begin{proof}
For each coordinate $i\in\{1,\dots,d\}$, let $\tilde U^{(i)}=\operatorname{span}\{\phi^{(i)}_\alpha : \alpha=1,\dots,r\}$ be a function space dense in $C([0,1])$, and let $U^{(i)}=\operatorname{alg}(\tilde U^{(i)})$ be the univariate algebra generated by $\tilde U^{(i)}$, with $1\in U^{(i)}$. Pick arbitrary $f,g\in\mathcal{A}$ such that
\begin{align}
    f(x)=\sum_{j=1}^{r}\ \prod_{i=1}^{d} \alpha^{(i)}_{j}(x_i),~~ g(x)=\sum_{k=1}^{s}\ \prod_{i=1}^{d} \beta^{(i)}_{k}(x_i),
\end{align}
with $\alpha^{(i)}_{j},\beta^{(i)}_{k}\in U^{(i)}$. Then,
\begin{align}
    f(x)+g(x)&=\sum_{j=1}^{r}\ \prod_{i=1}^{d} \alpha^{(i)}_{j}(x_i)+\sum_{k=1}^{s}\ \prod_{i=1}^{d} \beta^{(i)}_{k}(x_i),\\
    &=\sum_{l=1}^{r+s}\prod_{i=1}^d\gamma_l^{(i)}(x_i)\quad\text{where }~\gamma_l^{(i)}\in U^{(i)}.
\end{align}
Therefore $f+g\in\mathcal{A}$, confirming closure under addition. Take an arbitrary element $f\in\mathcal{A}$ as before, and multiply by an arbitrary scalar $\lambda\in\mathbb{R}$ so
\begin{align}
    \lambda f(x)=\sum_{j=1}^{r}\lambda \prod_{i=1}^{d} \alpha^{(i)}_{j}(x_i)=\sum_{j=1}^{r} \prod_{i=1}^{d} \tilde\alpha^{(i)}_{j}(x_i),
\end{align}
with $\tilde\alpha^{(1)}_{j}=\lambda\alpha^{(1)}_{j}$, all else unchanged. Since $U^{(i)}$ is a vector space, $\lambda\alpha^{(1)}_{j}\in U^{(i)}$, thus $\lambda f\in\mathcal{A}$ and $\mathcal{A}$ is closed under scalar multiplication. Pick $f,g\in\mathcal{A}$ as before. Then, their product is
\begin{align}
    (fg)(x)&=\left(\sum_{j=1}^{r}\ \prod_{i=1}^{d} \alpha^{(i)}_{j}(x_i)\right)\left(\sum_{k=1}^{s}\ \prod_{i=1}^{d} \beta^{(i)}_{k}(x_i)\right),\\
    &=\sum_{j=1}^{r}\sum_{k=1}^{s}\prod_{i=1}^{d}\gamma_{jk}^{(i)}(x_i),~ \gamma_{jk}^{(i)}=\alpha^{(i)}_{j}(x_i)\beta^{(i)}_{k}(x_i).
\end{align}
Recall $U^{(i)}=\operatorname{alg}(\tilde U^{(i)})$ is a unital algebra and so it follows by definition that $\gamma_{jk}^{(i)}\in U^{(i)}$. Therefore, $(fg)(x)\in\mathcal{A}$, and $\mathcal{A}$ is an algebra: closed under multiplication. Since $1\in U^{(i)}$, it follows that $\mathcal{A}$ is a unital algebra.

Take two distinct points $x,y\in[0,1]^d$, $x=(x_1,\dots,x_d),~y=(y_1,\dots,y_d), $ with $x_i\neq y_i$ for at least one $i=1,\dots,d$. Now, because $U^{(i)}$ is dense in $C([0,1])$ and $C([0,1])$ separates points, $U^{(i)}$ also separates points: $\exists~u\in U^{(i)}$ with $u(x_i)\neq u(y_i)$. Then the separable function $f(z)=u(z_i)\in\mathcal{A}$ and satisfies $f(x)\neq f(y)$. Therefore, $\mathcal{A}$ separates points.

Any unital subalgebra on $C([0,1]^d)$ that separates points is itself dense in $C([0,1]^d)$, given that $[0,1]^d$ is a compact Hausdorff space. This follows from Stone-Weierstrass. Indeed, $\mathcal{A}$ is dense in $C([0,1]^d)$; for a given $f^*\in C([0,1]^d)$ and any $\varepsilon>0$, density yields $f\in\mathcal{A}$ such that
\begin{align}
    \|f^*-f\|_\infty<\varepsilon.
\end{align}
By definition of $\mathcal{A}$, $f$ is a \emph{finite} sum of products. Therefore, there exists $r$ and functions $\{\phi^{(i)}_{j}\}$ such that
\begin{align}
\sup_{x\in[0,1]^d}\left|f^*(x)-\sum_{j=1}^{r}\prod_{i=1}^{d}\phi^{(i)}_{j}(x_i)\right|<\varepsilon,
\end{align}
which is the desired universal approximation statement.
\end{proof}

\begin{corollary}[Universal approximation of tensor decomposed separable neural architectures]
\label{cor:tduniv}
Let $\mathcal{F}_{TD}$ represent the functional class of tensor decomposed (TD)-separable neural architectures on $[0,1]^d$. Then $\mathcal{F}_{TD}$ is dense in $C([0,1]^d)$ with respect to the uniform norm $\|\cdot\|_\infty$.
\end{corollary}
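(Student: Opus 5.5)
The plan is to show that the CP-class is contained in the TD-class, and then inherit density from Theorem~\ref{thm:cpuniv}. First I would fix the interpretation of $\mathcal{F}_{TD}$. It is the union over finite ranks $r$ of functions $\sum_{j=1}^r c^{(j)}\phi^{(j)}(x;\theta^{(j)})$ under identity activation $\rho(x)=x$, where the atoms $\phi^{(j)}$ are built according to a tensor decomposition format (CP, Tucker, tensor train, and so on) from univariate factors. The density statement in the Methods section is made under identity activation, so I would adopt the same convention here.

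The key step is to show $\mathcal{A}\subseteq\mathcal{F}_{TD}$, with $\mathcal{A}$ as defined in Theorem~\ref{thm:cpuniv}. For an element $\sum_{j=1}^r\prod_{i=1}^d a^{(i)}_j(x_i)$, set $c^{(j)}=1$ and take the atom $\phi^{(j)}(x)=\prod_i a^{(i)}_j(x_i)$. This is exactly the CP specialisation described in the Tensorised classes paragraph, so the function lies in $\mathcal{F}_{TD}$.

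For the Tucker and tensor-train formats, I would check explicitly that CP is a special case:
\begin{itemize}
\item A rank-$r$ CP representation is a Tucker representation with superdiagonal core $\mathcal{G}_{j\cdots j}=1$ (all other entries zero) and factor functions $a^{(i)}_j$.
\item It is also a tensor-train representation with diagonal cores. The $i$-th core is $\operatorname{diag}(a^{(i)}_1(x_i),\dots,a^{(i)}_r(x_i))$ for interior $i$, and the first and last cores are the corresponding row and column vectors.
\end{itemize}
Multiplying out the train then recovers $\sum_j\prod_i a^{(i)}_j(x_i)$.

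With the inclusion established, density follows directly. Given $f^*\in C([0,1]^d)$ and $\varepsilon>0$, Theorem~\ref{thm:cpuniv} supplies $f\in\mathcal{A}$ with $\|f^*-f\|_\infty<\varepsilon$. Since $f\in\mathcal{F}_{TD}$, we get $\mathcal{A}\subseteq\mathcal{F}_{TD}\subseteq C([0,1]^d)$ and $\overline{\mathcal{F}_{TD}}\supseteq\overline{\mathcal{A}}=C([0,1]^d)$. The inclusion $\mathcal{F}_{TD}\subseteq C([0,1]^d)$ holds because the atoms are continuous: they are built from continuous univariate factors such as B-splines.

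The main obstacle is a definitional one rather than a technical one. The theorem requires the univariate factor families to be unital subalgebras $U^{(i)}$ dense in $C([0,1])$, whereas a TD-class SNA is specified by a learnable sub-atom family (e.g.\ B-splines on a fixed grid), which is not an algebra. I would handle this by taking the sub-atom families, as the union over all resolutions and parameters, to generate such an algebra. For B-splines over all refinements, or for polynomials, the span is already dense, and products of finitely many univariate sub-atoms can be absorbed by increasing the resolution. If needed, I would instead take $U^{(i)}$ to be the polynomials, which form a unital dense subalgebra by Weierstrass, and note that each polynomial is exactly a B-spline of sufficient degree.
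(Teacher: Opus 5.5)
Your proposal is correct and follows the paper's own route: show $\mathcal{A}\subseteq\mathcal{F}_{TD}$, then conclude because any superset of a dense set is dense. The paper does the inclusion in one line, observing that TD atoms are general multivariate functions of which CP products are a special case, and your explicit Tucker and tensor-train embeddings and the check $\mathcal{F}_{TD}\subseteq C([0,1]^d)$ are sensible additions it leaves implicit.

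One small caution on your side remark. Products of fixed-degree splines are not absorbed by refining the grid (e.g.\ $x\cdot x^3=x^4$ is not a cubic spline on any knot vector). No algebra closure is needed, though: uniform density of each univariate family already suffices, via a telescoping bound on products of uniformly close, bounded factors.
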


\begin{proof}
By definition, the Canonical Polyadic (CP) class $\mathcal{A}$ is a specific, restricted instantiation of the broader TD-class $\mathcal{F}_{TD}$. The general multidimensional atoms $\phi^{(j)}(x)$ permitted in the TD-class are strictly constrained in the CP-class to be factorised products of univariate subatoms, $\prod_{i=1}^d\phi^{(i)}_{j}(x_i)$. 

Because any valid CP-class function is inherently a valid TD-class function, the functional classes satisfy the strict inclusion relation $\mathcal{A} \subseteq \mathcal{F}_{TD}$. 

Theorem \ref{thm:cpuniv} establishes that the subset $\mathcal{A}$ is dense in $C([0,1]^d)$ under the uniform norm. It is a fundamental property of dense sets that any superset containing a dense subset is itself dense in the same space. Therefore, the superset $\mathcal{F}_{TD}$ must also be dense in $C([0,1]^d)$. 

Consequently, for any target function $f^* \in C([0,1]^d)$ and any $\varepsilon > 0$, there exists an approximator $f \in \mathcal{F}_{TD}$ with a finite rank $r$ such that $\|f^* - f\|_\infty < \varepsilon$.
\end{proof}

\section{Theoretical guarantees of the variational separable neural architecture}

Let $V$ be a Hilbert space corresponding to the spatiotemporal--parametric domain $\mathcal{X}$, equipped with the norm $\|\cdot\|_V$. Consider a variational problem defined by a bilinear form $a: V \times V \to \mathbb{R}$ and a linear functional $\ell: V \to \mathbb{R}$. The exact weak formulation seeks $u \in V$ such that
\begin{equation}
    a(u,v) = \ell(v) \quad \forall v \in V.
\end{equation}

The standard assumptions of regularity are imposed on the governing operator:
\begin{enumerate}
    \item \textbf{Boundedness:} There exists a constant $c_1 > 0$ such that $|a(u,v)| \leq c_1 \|u\|_V \|v\|_V$ for all $u,v \in V$.
    \item \textbf{Coercivity:} There exists a constant $c_0 > 0$ such that $a(v,v) \geq c_0 \|v\|_V^2$ for all $v \in V$.
    \item \textbf{Bounded linear functional:} $\ell \in V^*$, meaning there exists a constant $M > 0$ such that $|\ell(v)| \leq M \|v\|_V$ for all $v \in V$.
\end{enumerate}

Under these conditions, the Lax-Milgram theorem guarantees a unique exact solution $u \in V$. The finite-dimensional Canonical Polyadic (CP) trial space $\mathcal{F}_r \subset V$ of rank $r$ is constructed from univariate subatoms $\psi_i^{(j)} \in V^{(i)}$, and the Galerkin approximation $u_r \in \mathcal{F}_r$ is sought satisfying:
\begin{equation}
    a(u_r, v_r) = \ell(v_r) \quad \forall v_r \in \mathcal{F}_r.
\end{equation}

\begin{theorem}[Well-posedness]
    The VSNA Galerkin approximation admits a unique solution $u_r \in \mathcal{F}_r$.
\end{theorem}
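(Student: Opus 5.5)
The proof reduces the discrete problem to the Lax--Milgram theorem, or equivalently to an invertible linear system, on a finite-dimensional \emph{linear} subspace of $V$. The main difficulty is not analytic but structural. The set of all rank-$r$ CP functions $\sum_{j=1}^r\prod_{i}\psi^{(j)}_i$ with freely varying sub-atoms is not a vector space: the sum of two rank-$r$ elements generally has rank $2r$. So the Galerkin condition ``for all $v_r\in\mathcal F_r$'' is only meaningful once $\mathcal F_r$ is linear. I would therefore first make precise, following the phrase ``fixing the basis sub-atoms'' in the Variational guarantees paragraph, that $\mathcal F_r$ denotes the span of finitely many fixed separable functions. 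Concretely, let $\Phi_1,\dots,\Phi_N$ be the fixed modal products $\prod_{i=1}^d\psi^{(j)}_i$, or, in the KHRONOS instance, the tensor-product B-spline functions $\prod_i B^P_{c_i}(x_i)$ from which the sub-atoms are built. Then set $\mathcal F_r=\operatorname{span}\{\Phi_1,\dots,\Phi_N\}\subset V$. This is a finite-dimensional subspace of $V$ and hence closed, so $(\mathcal F_r,\|\cdot\|_V)$ is itself a Hilbert space.

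Next I would observe that boundedness, coercivity and boundedness of $\ell$ all restrict to $\mathcal F_r$ with the same constants $c_1$, $c_0$ and $M$. This holds because each is a pointwise-in-$V$ inequality, and $\mathcal F_r\subset V$. Applying Lax--Milgram on the Hilbert space $\mathcal F_r$ then immediately gives a unique $u_r\in\mathcal F_r$ with $a(u_r,v_r)=\ell(v_r)$ for all $v_r\in\mathcal F_r$.

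For a self-contained argument avoiding Lax--Milgram, I would instead choose a linearly independent subset of the $\Phi_k$ (discarding dependent ones, which does not change the span) and write $u_r=\sum_l \xi_l\Phi_l$. Testing against each $\Phi_k$ turns the Galerkin problem into the system $A\xi=b$, where $A_{kl}=a(\Phi_l,\Phi_k)$ and $b_k=\ell(\Phi_k)$; testing on the basis suffices by linearity of $a$ in its second argument and of $\ell$. For any $\eta\neq 0$, let $w=\sum_l\eta_l\Phi_l$. Linear independence gives $w\neq0$, so coercivity yields
\begin{equation}
\eta^{T}A\eta=a(w,w)\geq c_0\|w\|_V^2>0 .
\end{equation}
Hence $A$ is positive definite (though not necessarily symmetric), so $A\eta=0$ forces $\eta=0$, and $A$ is nonsingular. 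This gives existence and uniqueness of $\xi$, and therefore of $u_r$; uniqueness of $u_r$ as a function follows from linear independence of the basis.

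Finally, I would note that ``semi-linear'' in the Methods section must be read as bilinear here, as in the assumptions of this section. I expect the main obstacle to be the linearity issue identified at the start. If one insisted on the nonlinear rank-$r$ manifold with trainable sub-atoms, the Galerkin problem would become a nonlinear stationarity condition, and uniqueness could genuinely fail, since CP parametrisations are non-unique and the rank-$r$ set need not even be closed. So the proof must explicitly fix the sub-atoms and work with their span.
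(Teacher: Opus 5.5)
Your proposal is correct and follows the paper's own argument. With the sub-atoms fixed, $\mathcal{F}_r$ is a finite-dimensional (hence closed) subspace of $V$ on which $a$ stays bounded and coercive and $\ell$ stays bounded, so Lax--Milgram gives a unique $u_r$; your insistence that $\mathcal{F}_r$ be read as the linear span of fixed separable functions, together with the optional positive-definite stiffness-matrix argument, makes rigorous what the paper simply asserts when it says the trial space ``constitutes a finite-dimensional subspace''.
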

\begin{proof}
    For a fixed rank $r$ and fixed basis representations of the subatoms, the VSNA trial space $\mathcal{F}_r$ constitutes a finite-dimensional subspace of the Hilbert space $V$. Because $\mathcal{F}_r \subset V$, the bilinear form $a(\cdot,\cdot)$ remains bounded and coercive when restricted to $\mathcal{F}_r \times \mathcal{F}_r$. Similarly, the linear functional $\ell$ remains bounded on $\mathcal{F}_r$. By the application of the Lax-Milgram theorem to the finite-dimensional closed subspace $\mathcal{F}_r$, there exists a unique $u_r \in \mathcal{F}_r$ that satisfies the restricted variational problem.
\end{proof}

\begin{theorem}[Quasi-optimality]
\label{thm:qo}
    Let $u \in V$ be the exact solution and $u_r \in \mathcal{F}_r$ be the VSNA Galerkin solution. The approximation error is strictly bounded by the best possible approximation within the separable trial space:
    \begin{equation}
        \|u - u_r\|_V \leq \frac{c_1}{c_0} \min_{v_r \in \mathcal{F}_r} \|u - v_r\|_V.
    \end{equation}
\end{theorem}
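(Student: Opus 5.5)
This is Céa's lemma, and I would prove it by the standard route: Galerkin orthogonality, then coercivity and boundedness. As in the preceding well-posedness theorem, I fix the subatom bases, so that $\mathcal{F}_r$ is a finite-dimensional (hence closed) linear subspace of $V$. The Galerkin solution $u_r$ exists and is unique by that theorem.

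\textbf{Galerkin orthogonality.} Since $\mathcal{F}_r\subset V$, the exact problem can be tested with any $v_r\in\mathcal{F}_r$, giving $a(u,v_r)=\ell(v_r)$. Subtracting the discrete equation $a(u_r,v_r)=\ell(v_r)$ and using bilinearity yields
\begin{equation*}
a(u-u_r,w_r)=0 \quad \text{for all } w_r\in\mathcal{F}_r.
\end{equation*}

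\textbf{Main estimate.} Fix an arbitrary $v_r\in\mathcal{F}_r$. Because $\mathcal{F}_r$ is a linear space, $w_r := v_r-u_r\in\mathcal{F}_r$. I then chain coercivity, orthogonality and boundedness:
\begin{equation*}
c_0\|u-u_r\|_V^2 \le a(u-u_r,u-u_r) = a(u-u_r,u-v_r)+a(u-u_r,v_r-u_r) = a(u-u_r,u-v_r) \le c_1\|u-u_r\|_V\|u-v_r\|_V .
\end{equation*}
If $\|u-u_r\|_V=0$ the bound holds trivially; otherwise I divide by $\|u-u_r\|_V$ to get $\|u-u_r\|_V\le \frac{c_1}{c_0}\|u-v_r\|_V$. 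Since $v_r$ was arbitrary, I take the infimum over $\mathcal{F}_r$.

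\textbf{Infimum versus minimum.} The statement writes a minimum, so I must check it is attained. It is, because $\mathcal{F}_r$ is a finite-dimensional, hence closed, subspace of the Hilbert space $V$, so the orthogonal projection of $u$ onto $\mathcal{F}_r$ realises the infimum.

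\textbf{Main obstacle.} The only delicate point is the linearity of $\mathcal{F}_r$. Orthogonality and the step $v_r-u_r\in\mathcal{F}_r$ both need it, and the full rank-$r$ CP set with learnable subatoms is not a linear space: sums of rank-$r$ elements can have rank up to $2r$. I would therefore state explicitly that the theorem is read with the subatoms fixed, so that $\mathcal{F}_r$ is the linear span of the fixed separable modes, matching the convention stated before the list of variational guarantees. Under that reading every step above is valid.
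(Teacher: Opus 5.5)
Your proof is correct and follows the paper's argument essentially step for step. Both derive Galerkin orthogonality by testing the exact problem with $v_r\in\mathcal{F}_r$, then chain coercivity, the add-and-subtract of $v_r$ (so the term with $v_r-u_r\in\mathcal{F}_r$ vanishes), boundedness, division by $\|u-u_r\|_V$, and minimisation over $v_r$, which is C\'ea's lemma. Your extra remarks tighten the argument rather than change it, since the paper leaves these points implicit. You treat the trivial case $u=u_r$ separately, whereas the paper divides without comment. You show the minimum is attained via orthogonal projection onto a closed finite-dimensional subspace. And you require that $\mathcal{F}_r$ be read as a linear space with fixed subatoms, which the paper tacitly assumes when it calls $\mathcal{F}_r$ a subspace and uses $v_r-u_r\in\mathcal{F}_r$.
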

\begin{proof}
    Because $\mathcal{F}_r \subset V$, $v = v_r \in \mathcal{F}_r$ may be chosen as a test function in the exact problem, yielding $a(u, v_r) = \ell(v_r)$. Subtracting the Galerkin formulation $a(u_r, v_r) = \ell(v_r)$ from this yields the fundamental Galerkin orthogonality condition:
    \begin{equation}
        a(u - u_r, v_r) = 0 \quad \forall v_r \in \mathcal{F}_r.
    \end{equation}
    For any function $v_r \in \mathcal{F}_r$, coercivity gives:
    \begin{equation}
        c_0 \|u - u_r\|_V^2 \leq a(u - u_r, u - u_r).
    \end{equation}
    Adding and subtracting $v_r$, and applying the bilinearity of $a(\cdot,\cdot)$ gives
    \begin{equation}
        a(u - u_r, u - u_r) = a(u - u_r, u - v_r) + a(u - u_r, v_r - u_r).
    \end{equation}
    Since $(v_r - u_r) \in \mathcal{F}_r$, the second term vanishes due to Galerkin orthogonality. Applying the boundedness of the bilinear form to the remaining term yields
    \begin{equation}
        c_0 \|u - u_r\|_V^2 \leq a(u - u_r, u - v_r) \leq c_1 \|u - u_r\|_V \|u - v_r\|_V.
    \end{equation}
    Dividing by $c_0 \|u - u_r\|_V$ gives $\|u - u_r\|_V \leq \frac{c_1}{c_0} \|u - v_r\|_V$. Because this holds for any $v_r \in \mathcal{F}_r$, the minimum over all $v_r \in \mathcal{F}_r$ may be taken, recovering Céa's Lemma for the separable neural architecture.
\end{proof}

\begin{theorem}[Convergence]
    Assume $k=d$. If each univariate subatom family $\psi_i^{(j)}$ is dense in its respective coordinate space $V^{(i)}$, the VSNA Galerkin solution $u_r$ converges to the exact solution $u$ as the interaction rank $r \to \infty$. That is, $\lim_{r \to \infty} \|u - u_r\|_V = 0$.
\end{theorem}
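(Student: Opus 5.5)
The argument combines the quasi-optimality bound of Theorem~\ref{thm:qo} with a density statement for $\bigcup_r \mathcal F_r$ in $V$. By Theorem~\ref{thm:qo}, for every $r$,
\begin{equation*}
\|u-u_r\|_V \le \frac{c_1}{c_0}\,\varepsilon_r,\qquad \varepsilon_r := \inf_{v_r\in\mathcal F_r}\|u-v_r\|_V .
\end{equation*}
If the infimum is not attained, it suffices to pick any $v_r$ within a factor two of it. Since $\mathcal F_r\subseteq\mathcal F_{r+1}$ (pad a rank-$r$ expansion with zero modes, which are admissible because each $V^{(i)}$ is a vector space), the sequence $\varepsilon_r$ is nonincreasing. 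It therefore suffices to show $\varepsilon_r\to 0$, that is, that $\bigcup_r\mathcal F_r$ is dense in $V$.

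\textbf{Density of finite-rank separable functions.} Here $V=\bigotimes_{i=1}^d V^{(i)}$ is the Hilbert tensor product, defined as the completion of the algebraic tensor product $V^{(1)}\otimes_{\mathrm{alg}}\cdots\otimes_{\mathrm{alg}} V^{(d)}$ under the induced cross norm. By construction, finite sums of elementary tensors $\sum_{j=1}^r\bigotimes_i w_i^{(j)}$ with $w_i^{(j)}\in V^{(i)}$ are dense in $V$. Fix $\eta>0$ and choose such a finite sum $w$ with $\|u-w\|_V<\eta/2$.

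\textbf{Replacing factors by subatoms.} Next, each factor $w_i^{(j)}$ is replaced by a subatom $\psi_i^{(j)}$ from the assumed dense family, chosen with $\|w_i^{(j)}-\psi_i^{(j)}\|_{V^{(i)}}<\delta$. The cross-norm property $\|\bigotimes_i a_i\|_V=\prod_i\|a_i\|_{V^{(i)}}$ lets us telescope each product:
\begin{equation*}
\Bigl\|\bigotimes_i w_i-\bigotimes_i\psi_i\Bigr\|_V\le\sum_{m=1}^d\Bigl(\prod_{i<m}\|\psi_i\|\Bigr)\|w_m-\psi_m\|\Bigl(\prod_{i>m}\|w_i\|\Bigr)\le d\,\delta\,(B+\delta)^{d-1},
\end{equation*}
where $B=\max_{i,j}\|w_i^{(j)}\|$. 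Summing over the $r$ modes and taking $\delta$ small enough gives a $v\in\mathcal F_r$ with $\|w-v\|_V<\eta/2$. Hence $\varepsilon_{r}<\eta$ for this $r$, and by monotonicity for all larger $r$ as well. Thus $\varepsilon_r\to0$, and therefore $\|u-u_r\|_V\to0$.

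\textbf{Main obstacle.} The delicate point is compatibility between the Galerkin framework and the trial sets. Theorem~\ref{thm:qo} and well-posedness treat $\mathcal F_r$ as a (closed) linear subspace, but a rank-$r$ CP set is not closed under addition. I would handle this as the earlier results do: fix the basis, take $\mathcal F_r$ to be the linear span $\bigotimes_i\operatorname{span}\{\psi_i^{(1)},\dots,\psi_i^{(r)}\}$ of the first $r$ subatoms per coordinate, and take the subatom families to be nested. This span is finite-dimensional and contains all rank-$\le r$ CP combinations of those subatoms, so it is closed, Céa applies, and nestedness gives monotonicity. The density step then needs the approximants $\psi_i^{(j)}$ to lie in some $\operatorname{span}\{\psi_i^{(1)},\dots,\psi_i^{(R)}\}$ for finite $R$, which holds because each is a finite combination from a family whose span is dense. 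I would also state explicitly that density in $V$ is meant for the tensor-product Hilbert norm, so the cross-norm identity is available. If $V$ is instead a Sobolev space not literally equal to the Hilbert tensor product, that identification would have to be assumed.
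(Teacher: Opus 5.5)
Your proof is correct and has the same skeleton as the paper's: quasi-optimality reduces convergence to density of $\bigcup_r\mathcal F_r$ in $V$, together with monotonicity of the best-approximation error. Where you differ is the density lemma itself.

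The paper simply invokes its Stone--Weierstrass corollary, which gives uniform density in $C([0,1]^d)$ and is proved for unital subalgebras $U^{(i)}$. It adds an unproved ``extension to the Hilbert tensor product''. You instead prove density directly in $\|\cdot\|_V$: finite sums of elementary tensors are dense in the completion by definition, and the telescoping cross-norm estimate lets you swap each factor for a nearby subatom.

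What your route buys:
\begin{itemize}
\item It uses exactly the stated hypothesis, density of each univariate family in $V^{(i)}$, with no algebra or continuity requirement. (Fixed-degree spline spaces, for instance, are not closed under multiplication.)
\item It works in the right norm, since sup-norm density does not by itself give density in an $H^1$-type norm.
\item It justifies the nestedness behind the paper's word ``monotonically''.
\item It confronts the fact that a rank-$r$ CP set is not a linear space. Galerkin orthogonality needs $v_r-u_r\in\mathcal F_r$, and Lax--Milgram needs a subspace, so both require a linear $\mathcal F_r$. The paper passes over this by ``fixing the basis sub-atoms''.
\end{itemize}
The paper's route buys brevity and reuse of its earlier theorem. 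It is adequate only when its algebra hypotheses hold and $\|\cdot\|_V$ is dominated by the sup norm.

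One small cost of your fix: with $\mathcal F_r=\bigotimes_i\operatorname{span}\{\psi_i^{(1)},\dots,\psi_i^{(r)}\}$, the index $r$ becomes a per-coordinate basis size and the space has dimension up to $r^d$. So you are really proving convergence under basis refinement rather than rank refinement. If you want $r$ to count modes, take $\mathcal F_r$ to be the span of the first $r$ elementary tensors $\bigotimes_i\psi_i^{(\alpha_i)}$ in some enumeration of multi-indices. That space is linear and nested, and its elements have the CP form $\sum_{j\le r}c^{(j)}\prod_i\psi_i$. Its union over $r$ is the same dense set, so your argument goes through unchanged.
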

\begin{proof}
    As established in Corollary \ref{cor:tduniv} and its extension to Hilbert tensor product $V=\bigotimes_{i=1}^dV^{(i)}$, the canonical polyadic tensor class $\bigcup_{r=1}^\infty \mathcal{F}_r$ is dense in the joint Hilbert space $V = \bigotimes_{i=1}^d V^{(i)}$ under the assumption that the univariate bases are dense in their respective 1D spaces. Therefore, for any $\varepsilon > 0$, there exists a finite rank $r^*$ and an approximator $v_{r^*} \in \mathcal{F}_{r^*}$ such that the best-approximation error satisfies:
    \begin{equation}
        \min_{v_{r^*} \in \mathcal{F}_{r^*}} \|u - v_{r^*}\|_V < \frac{c_0}{c_1} \varepsilon.
    \end{equation}
    Applying the quasi-optimality bound from Theorem \ref{thm:qo}, the error of the Galerkin solution $u_{r^*}$ is bounded by:
    \begin{equation}
        \|u - u_{r^*}\|_V \leq \frac{c_1}{c_0} \left( \frac{c_0}{c_1} \varepsilon \right) = \varepsilon.
    \end{equation}
    As rank $r \to \infty$, the best-approximation error monotonically approaches zero, guaranteeing that the Galerkin solution converges to the exact weak solution $u$.
\end{proof}

\begin{theorem}[Stability]
    The VSNA Galerkin solution $u_r$ satisfies the absolute stability bound:
    \begin{equation}
        \|u_r\|_V \leq \frac{1}{c_0} \|\ell\|_{V^*}.
    \end{equation}
\end{theorem}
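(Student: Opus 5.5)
The plan is the standard energy estimate for a Galerkin solution: test the discrete problem with the solution itself and combine coercivity with the dual-norm bound on $\ell$. By the Well-posedness theorem, $u_r \in \mathcal{F}_r$ exists and satisfies $a(u_r, v_r) = \ell(v_r)$ for all $v_r \in \mathcal{F}_r$. Since $u_r$ itself lies in $\mathcal{F}_r$, it is an admissible test function, and choosing $v_r = u_r$ gives the identity $a(u_r,u_r) = \ell(u_r)$.

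Next I bound the two sides of this identity separately. Coercivity gives the lower bound $c_0\|u_r\|_V^2 \le a(u_r,u_r)$. For the upper bound, I use the definition of the dual norm stated in the Methods section, $\|\ell\|_{V^*} = \sup_{v\neq 0} \ell(v)/\|v\|_V$. Since $u_r \in \mathcal{F}_r \subset V$, this yields $|\ell(u_r)| \le \|\ell\|_{V^*}\|u_r\|_V$, and $\|\ell\|_{V^*}$ is finite because $\ell \in V^*$ (with constant $M$) by assumption 3. Chaining the two bounds gives
\begin{equation*}
c_0\|u_r\|_V^2 \le a(u_r,u_r) = \ell(u_r) \le \|\ell\|_{V^*}\,\|u_r\|_V .
\end{equation*}

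To finish, I split into two cases. If $u_r = 0$, the bound holds trivially because the right-hand side is nonnegative. Otherwise I divide by $c_0\|u_r\|_V > 0$, which gives $\|u_r\|_V \le \frac{1}{c_0}\|\ell\|_{V^*}$. The bound is independent of $r$, which is the content of "absolute" stability.

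There is no substantive obstacle; the only point needing care is that $u_r$ is a legitimate test function, which holds because the Galerkin problem is posed with test space equal to the trial space $\mathcal{F}_r$, a linear subspace once the basis subatoms are fixed. I would also note that the estimate uses only coercivity; the boundedness constant $c_1$ does not enter.
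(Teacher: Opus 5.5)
Your proof is correct and matches the paper's argument: test the Galerkin equation with $u_r$, bound $a(u_r,u_r)$ below by coercivity and $\ell(u_r)$ above by the dual norm, then divide by $c_0\|u_r\|_V$. Your explicit handling of the case $u_r = 0$ is slightly more complete than the paper, which simply assumes $u_r \neq 0$ in a parenthetical.
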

\begin{proof}
    Choosing $v_r = u_r$ in the Galerkin formulation yields $a(u_r, u_r) = \ell(u_r)$. Applying the coercivity of $a(\cdot,\cdot)$ to the left-hand side and the definition of the dual norm $\|\ell\|_{V^*} = \sup_{v \neq 0} \frac{\ell(v)}{\|v\|_V}$ to the right-hand side gives
    \begin{equation}
        c_0 \|u_r\|_V^2 \leq a(u_r, u_r) = \ell(u_r) \leq \|\ell\|_{V^*} \|u_r\|_V.
    \end{equation}
    Dividing both sides by $c_0 \|u_r\|_V$ (assuming $u_r \neq 0$) yields the desired stability bound, demonstrating that the learned separable representation is continuously dependent on the problem data and cannot diverge unless the source functional itself is unbounded.
\end{proof}

\section{Tensor-native optimisation}

Direct solution of resulting global Galerkin systems $Au=b$ over $d$-dimensional domains requires assembling an operator of size $O(N^d \times N^d)$, where $N$ is the number of per-dimension degrees of freedom. For the six-dimensional spatiotemporal--parametric problems considered, this global assembly is computationally intractable. To bypass this -- the ‘‘curse of dimensionality'' -- a tensor-native alternating least-squares (ALS) approach is applied. By enforcing the Canonical Polyadic (CP) structure on the trial space, the globally non-linear Galerkin projection is solved via a sequence of linear, one-dimensional updates. 

Fixing all dimensions other than $d$ and solving for $\theta^{(d)}$ alone, the local ALS system is written
\begin{equation}
    \left( {A}^{(d)} + \lambda I \right) \mathrm{vec}(\theta^{(d)}) = b^{(d)}
\end{equation}
where $\lambda$ is a small Tikhonov regularisation parameter introduced to ensure strict positive-definiteness of the local operator. 

Crucially, the local stiffness matrix $A^{(d)}$ and load vector $b^{(d)}$ are assembled without ever constructing the global operators. Instead, they are computed exactly via Kronecker products of one-dimensional integral matrices from the active dimension and the contracted interaction weights from the fixed dimensions. 

Physical constraints are imposed explicitly. Spatial Dirichlet boundary conditions and temporal initial conditions are strongly enforced on the relevant univariate subatoms prior to each local solve. This is possible by lifting the initial condition $u(\cdot,t=0)=u_0$ to write $u=u'+u_0$ and solving for $u'$ with the temporal subatom initially respecting a homogeneous Dirichlet condition. To prevent numerical underflow or overflow across the high-order tensor products, the active subatoms are re-normalised after each update, with the scalar magnitudes symmetrically absorbed into the remaining fixed dimensions.

This iterative procedure sweeps through all $d$ dimensions sequentially until the relative change in the global residual norm falls below a specified tolerance. By operating entirely within the factorised latent space, the tensor-native formulation reduces the computational complexity of solving the high-dimensional PDE from $\mathcal{O}(N^{3d})$ (for a direct global solve) to $\mathcal{O}(I_{max} \cdot d \cdot R^2 N)$, where $I_{max}$ is the number of ALS iterations. This effectively eliminates the exponential scaling bottleneck of traditional grid-based discretisations.

\section{Standalone benchmarks}

The following benchmarks accompany examples in the main text to further validate the CP-class SNA/VSNA as a standalone architecture. Two regimes are considered: high-dimensional supervised regression, in which KHRONOS is evaluated against contemporary surrogates on problems where ground truth is known exactly; and variational PDE solution, in which a CP-class VSNA is extended beyond the coercive bilinear setting of the established theory to a nonlinear, shock-forming operator. All experiments are run on commodity CPU hardware.

\subsection{Supervised regression}

KHRONOS is evaluated against Random Forest (RF) \cite{breiman2001random}, XGBoost \cite{chen2016xgboost} and a multilayer perceptron (MLP) on two benchmark problems of increasing difficulty. In each case, model complexity was increased until validation $R^2=0.999$ to provide a consistent saturation point for comparison.

\paragraph{8-dimensional borehole function.} The borehole function,
\begin{align}
    \label{eq:borehole}
    u(p)=2\pi p_3(p_4-p_6)\left(\log\left(\frac{p_2}{p_1}\right) \left(1+2\frac{p_7p_3}{\log\left(\frac{p_2}{p_1}\right)p_1^2p_8}+\frac{p_3}{p_5}\right) \right)^{-1},
\end{align}
is a standard surrogate modelling benchmark. Data are generated at 100,000 points via Latin Hypercube sampling, normalised and split 70/30 train--test. Table \ref{tab:borehole} summarises results. RF saturates at $R^2=0.9969$, failing to reach the target. KHRONOS achieves $R^2=0.9998$ with only 240 trainable parameters -- over four orders of magnitude fewer than RF, $350\times$ fewer than XGBoost and $20\times$ fewer than an MLP. KHRONOS is the only model to reach the target accuracy in under a second.

\begin{table*}[!ht]
\centering
\caption{Regression benchmark on the 8D borehole problem.}
\label{tab:borehole}
\begin{tabular}{lcccc}
\toprule
Metric & Random Forest\cite{breiman2001random} & XGBoost \cite{chen2016xgboost} & MLP & KHRONOS \\
\midrule
Trainable parameters & 4,261,376 & 84,600 & 5601 & 240\\
Training time, s       & 2.8 & 1.5 & 22 & 0.87\\
Test MSE                & $1.0\times 10^{-4}$ & $3.3\times 10^{-5}$ & $2.8\times 10^{-5}$ & $2.2\times 10^{-5}$ \\
Test R$^2$              & 0.9969 & 0.9990 & 0.9992 & 0.9998\\
\bottomrule
\end{tabular}
\end{table*}

\paragraph{20-dimensional noisy Sobol-G function}

The Sobol-G function,
\begin{align}
    u(p)&=\prod_{i=1}^{20}\frac{|4p_i -2|+a_i}{1+a_i},\\
    a_i&=
    \begin{cases}
        0&\quad\textrm{for }i=1,\dots,5\\
        \frac32&\quad\textrm{for }i=6,\dots,10\\
        4&\quad\textrm{for }i=11,\dots,20
    \end{cases},\\
    p&=[0, 1]^{20},
\end{align}
is itself fully separable: a product of univariate terms, and therefore lies within only a finite-dimensional representational space instantiated by KHRONOS. This makes it a direct empirical test of the central claim of this work -- that should it exist, the SNA is the primitive to exploit latent separable structure. Outputs are corrupted additive noise $\epsilon\sim\mathcal{N}(0,0.01^2)$ and evaluated at 100,000 LHS points. Results are given in Table \ref{tab:sobol}. RF, XGBoost and the MLP all saturate far below the target accuracy; this is not for want of parameters, but misaligned inductive biases. KHRONOS reaches $R^2=0.9994$ with 1,560 parameters, trained in 5.1 seconds, directly exploiting structure the other models cannot see.

\begin{table*}[!ht]
\centering
\caption{Regression benchmark on a noisy 20D Sobol-G Function.}
\label{tab:sobol}
\begin{tabular}{lcccc}
\toprule
Metric & Random Forest \cite{breiman2001random} & XGBoost \cite{chen2016xgboost} & MLP & KHRONOS \\
\midrule
Trainable parameters & 8,849,208 & 239,564 & 13,569 & 1560\\
Training time (s)       & 5.4 & 5.6 & 66 & 5.1\\
Test MSE& $4.6\times 10^{-4}$ & $3.3\times 10^{-5}$ & $1.4\times 10^{-4}$ & $6.8\times 10^{-7}$ \\
Test $R^2$              & 0.5565 & 0.7312 & 0.8788 & 0.9994\\
\bottomrule
\end{tabular}
\end{table*}

\subsection{Variational PDE solution}

This section presents a qualitatively harder regime than what is established in the main text, in the form of a one-dimensional inviscid Burgers' equation,
\begin{align}
    \partial_tu+u\partial_x u=0,\quad(x,t)\in[0,1]^2.
\end{align}
The nonlinear advection term $u\partial_x u$ produces a trilinear form in the weak formulation, placing this problem outside not only the coercive but the bilinear setting of the established theory.

It is solved by a CP-class VSNA with a Fourier basis in the spatial coordinate -- strongly enforcing Dirichlet boundary conditions - and orthonormalised Chebyshev basis in time. The initial condition is strongly enforced via a lifting of the trial space to satisfy $\hat u(x,0)=u_0(x)$. Whilst this is a spectral combination of basis functions, the standard exponential rate of convergence is not expected here; the operator is nonlinear. The weak residual,
\begin{align}
\label{eq:weak}
    \mathcal{R}\{\hat{u}, v\} = \int_0^1\int_0^1 v(
    \partial_t \hat{u} + \hat{u}\,\partial_x \hat{u} )~dx~dt = 0,
    \quad \forall v \in V,
\end{align}
is minimised over a finite test space via L-BFGS. The separable structure of the trial space reduces \eqref{eq:weak} to contractions over one-dimensional quadratures to avoid grid-based assembly. It is notable that this reduction is agnostic to the order of the resulting multilinear form: whether the weak formulation is bilinear, as in linear PDE theory, or trilinear, as here, the trial space accordingly decomposes each integral into the same class of one-dimensional atom contractions. The inherent cost is in the proliferation of cross terms growing as $O(r^n)$ in the trial space rank $r$ and form order $n$. 

The result is shown in Fig. \ref{fig:burgers} -- the VSNA accurately recovers the smooth pre-shock solution, with absolute errors below 0.02 across the domain. The spatially ringing residual structure visible as $x,t\rightarrow1$ is consistent with characteristic convergence at the domain corner at an impending shock; Gibbs phenomenon.

\begin{figure}
    \centering
    \includegraphics[width=\linewidth]{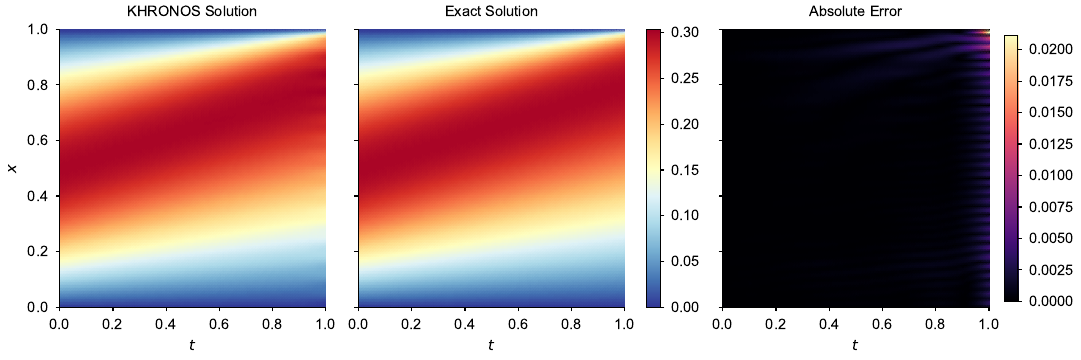}
    \caption{\textbf{Variational solution of the inviscid Burgers' equation by a CP-class VSNA.} The recovered solution (left) and exact solution (centre) show close agreement across the full spatiotemporal domain. Absolute errors remain below 0.02 throughout; the spatially ringing residual structure visible as $x,t\rightarrow1$ is consistent with impending shock collapse.}
    \label{fig:burgers}
\end{figure}

\section{Reinforcement learning for autonomous control}

Dynamic, closed-loop sequential decision-making represents a challenge for neural function approximation, as minor predictive inaccuracies cascade over time. An autonomous waypoint-navigation task is formulated within the CARLA \cite{dosovitskiy2017carla} driving simulator. Navigation is controlled by a Deep Deterministic Policy Gradient (DDPG) framework \cite{lillicrap2019continuous, perezgil2022drlcarla}. The effect of network topology is isolated by replacing both deterministic actor ($\mu(s|\theta^\mu)$) and action-value critic ($Q(s,a|\theta^Q)$) networks with a spline-based adaptive network (SPAN) \cite{mostakim2026agile}. SPAN is a composite adaptation of KHRONOS, in which a dense layer learns how to best disentangle raw input streams to a low-rank latent space. This bears a conceptual resemblance to Koopman operator theory \cite{lusch2018koopman, brunton2016sindy}, wherein nonlinear dynamics are expressed in coordinates that admit simpler evolution operators. Specifically, nonlinear dynamics are mapped to a latent observation space in which evolution is well approximated by a separable representation, viz. KHRONOS.

In this problem, the architectures were tasked with learning mappings from environmental observations to vehicular controls. At each timestep, the networks processed a normalised state vector comprising the vehicle's longitudinal velocity, its relative heading angle and the lateral coordinates of 15 upcoming trajectory waypoints. From this state, the deterministic actors output a continuous two-dimensional control signal dictating steering and throttle actuation. To succeed, the agents must continuously minimise lateral deviation from the route whilst avoiding failure conditions: collisions, lane invasions and timeouts. The MLP baseline has 1,917 total parameters divided between its actor and critic networks, with SPAN containing 1,901.

Empirical results from the benchmark highlight the sensitivity of dense architectures in closed-loop continuous control, shown in Fig. \ref{fig:span}\textbf{a}. Across 60 evaluation episodes spanning 20 distinct routes, the parameter-matched MLP baseline achieves a global goal completion rate of only $50\%$. This fragility likely arises from the interaction between policy and value approximation in DDPG. When the critic is represented by an unconstrained, monolithic network, the resulting action-value landscape becomes irregular. This produces unstable policy gradients for the actor. In closed-loop driving, these instabilities compound over time. As shown in Fig. \ref{fig:span}\textbf{b,c}, the MLP agent rapidly diverges from the target waypoints, resulting in lane invasions and thus early termination. 

SPAN's B-spline basis enforces smooth mappings in both actor and critic networks. Furthermore, by restricting featurewise interactions to factorised combinations, their input--output Jacobians are structured. This ensures the critic learns a better-behaved action-value landscape to yield more stable policy gradients for the actor. Consequently, SPAN achieves an $83.3\%$ success rate. Whilst improving generalisation, the highly stochastic nature of control does not guarantee superior performance across routes. For instance, on Route 12 (Fig. \ref{fig:span}\textbf{d}), the SPAN agent suffers a repeated early lane invasion whereas the MLP agent navigates to the endpoint. Nevertheless, the aggregate result demonstrates that the imposition of a separable inductive bias stabilises the policy-value learning dynamics of DDPG. This suggests that the separable neural primitive can enhance predictive and generative intelligence not only in canonical models, but also when embedded within a composite learning system.

\begin{figure}
    \centering
    \includegraphics[width=\linewidth]{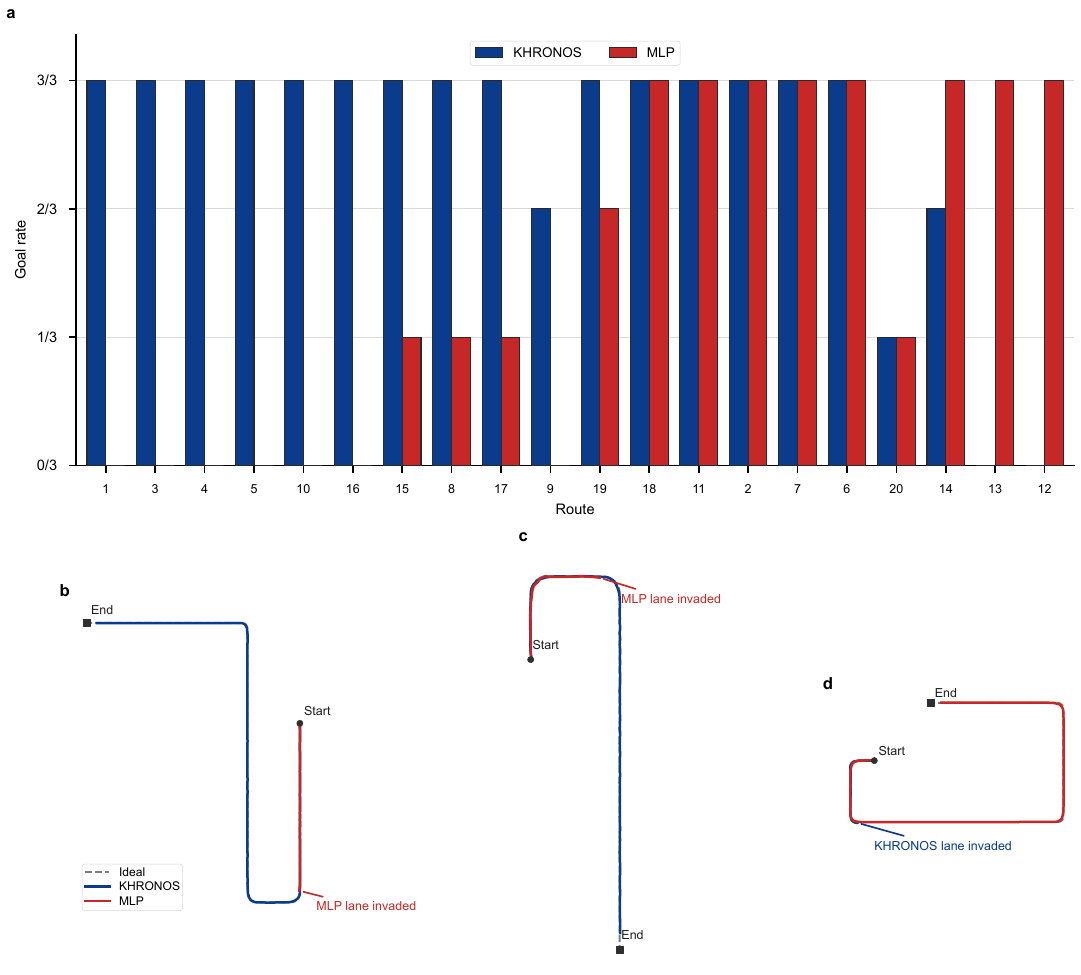}
    \caption{\textbf{Spline-based adaptive networks (SPAN) applied to closed-loop autonomous navigation.} \textbf{a}, Quantitative comparison of goal completion rates on 3 independent trials on each of the 20 routes in the CARLA simulator. SPAN achieves an 83.3\% success rate compared to 50\% for the MLP. \textbf{b, c}, Representative top-down trajectories illustrating key SPAN successes, (\textbf{b}, Route 3; \textbf{c}, Route 9). SPAN (blue) tracks the ideal trajectory (dashed grey) whilst the MLP (red) invades lanes early on, terminating the run. \textbf{d}, A highlighted failure mode (Route 12) in which SPAN experiences early lane invasions whilst the MLP navigates to the end.}
    \label{fig:span}
\end{figure}

\section{Generative inversion}

The specific contribution of Janus's separable predictive head is isolated. To demonstrate that this mechanism is beneficial, an ablation under approximate parameter parity is carried out. In particular, given the 64-dimensional latent approximated by a rank-32, 8-element CP-SNA, precisely 17,207 parameters were used. The ablation multilayer perceptron (MLP) was thus constructed with two hidden ReLU layers of widths 128 and 64, totalling 18,071 parameters. 

On forward-mode pretraining, the ablation baseline achieved an $R^2$ of 0.992, marginally above Janus's $R^2$ of 0.989. It is notable that both models incur identical walltimes for both training and inversion, confirming that any performance difference is architectural rather than computational. In evaluating invertibility, both models were tasked with generating microstructures across three representative design points from the macroscopic beam gradient: the root (high density, $V_f=0.65$, $C_{1111}=350,000$ MPa), the midpoint (medium density, $V_f=0.45$, $C_{1111}=200,000$ MPa) and the tip (high porosity, $V_f=0.25$, $C_{1111}=50,000$ MPa). Both models minimised the target loss via maximum a posteriori (MAP) estimation over 800 steps, at 16 different initial seeds.

Subsequent physical validation by means of fast Fourier transform (FFT) homogenisation revealed the magnitude of disparity between model-predicted and true physical viability, as shown in Fig. \ref{fig:ablation}\textbf{a}. The champion of each swarm -- the lowest-error generated cell -- is then analysed. Whilst both Janus and ablation baselines grossly overpredict their own performance -- a well-known phenomenon: gradient hallucination -- Janus's generation was superior at all points, ranging from $29.48\%$ better at the midpoint to $81.53\%$ better at the root. Notable is that the MLP predicts a ‘‘high'' relative error of almost 5\% at the tip. This suggests difficulty in the MLP's inversion -- the model has not successfully minimised even its own predictive error.

This difficulty is systemic across the generated distribution. The ensemble statistics, visualised as a box-and-whiskers plot in Fig. \ref{fig:ablation}\textbf{b}, reveal the broader consequences of the gradient entanglement of the MLP. With its dense, less convex Jacobian, the MLP's latent trajectories may be consistently trapped in scattered local minima. This worsens the statistics across the board; indeed, for each design point, the relative error means and quartiles of the ablation baseline's generations lie above those of Janus. The multilinear Jacobian of the CP-SNA likely mitigates such gradient trapping, consistently anchoring its ensemble closer to the true physical manifold.

\begin{figure}[!ht]
    \centering
    \includegraphics[width=\linewidth]{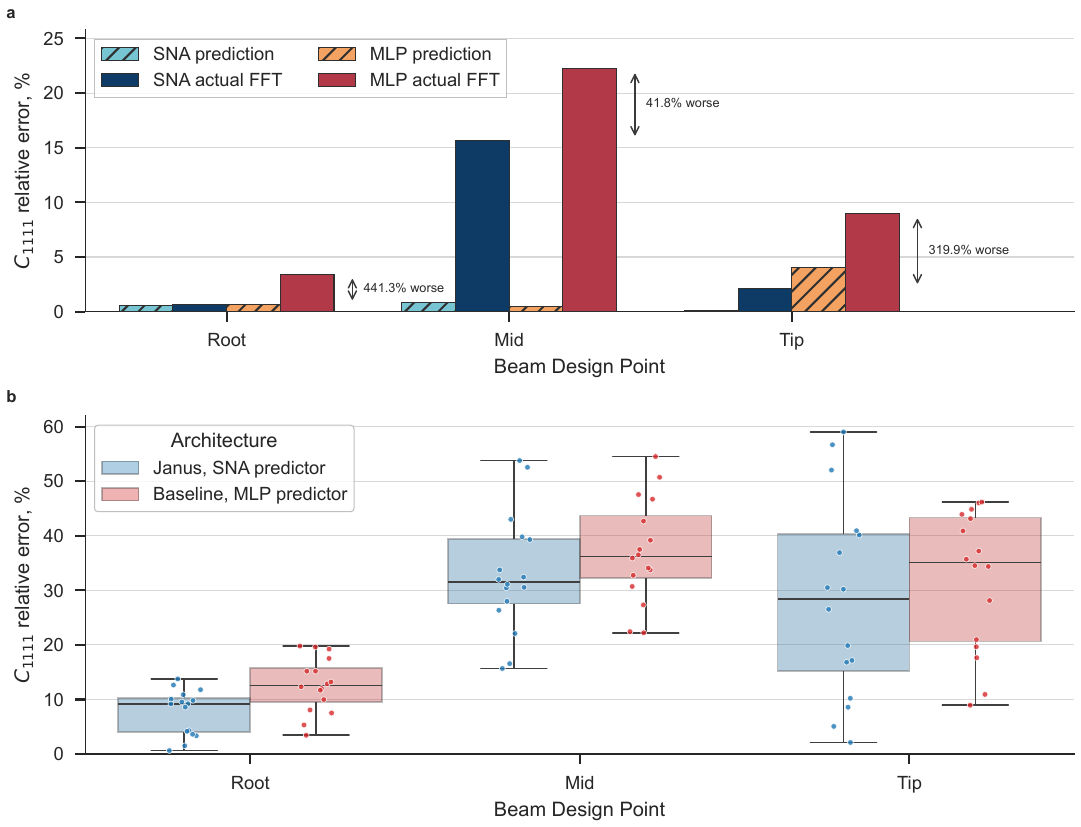}
    \caption{\textbf{Generative inversion ablation.} \textbf{a}, Comparison of predicted -- by the predictive heads -- versus actual physical errors. These are as evaluated via fast Fourier transform (FFT) homogenisation for the champion generated unit cells at three representative macroscopic design points, the root ($V_f=0.65$), the middle ($V_f=0.45$) and the tip ($V_f=0.25$). Whilst the unconstrained MLP baseline head predicts low deviations at root and midpoint (hatched orange bars), the FFT solver exposes catastrophic divergence (solid red bars). The MLP faces a more difficult inversion objective, evidenced by the stubbornly large predicted error, almost 5\%, at the tip. Janus also hallucinates, but produces significantly higher fidelity unit cells. \textbf{b}, Ensemble statistics of physical errors for the 16 randomly initialised latent seeds per design point, as visualised by a box-and-whiskers plot with overlaid swarm. Because the MLP's dense and entangled weights create a less convex Jacobian than Janus's, a higher optimisation floor associated with scattered local minima is established.}
    \label{fig:ablation}
\end{figure}

\subsection{Janus applied to Sketch-to-stress field prediction and latent inversion}
\begin{figure}[h!]
    \centering
    \includegraphics[width=\linewidth]{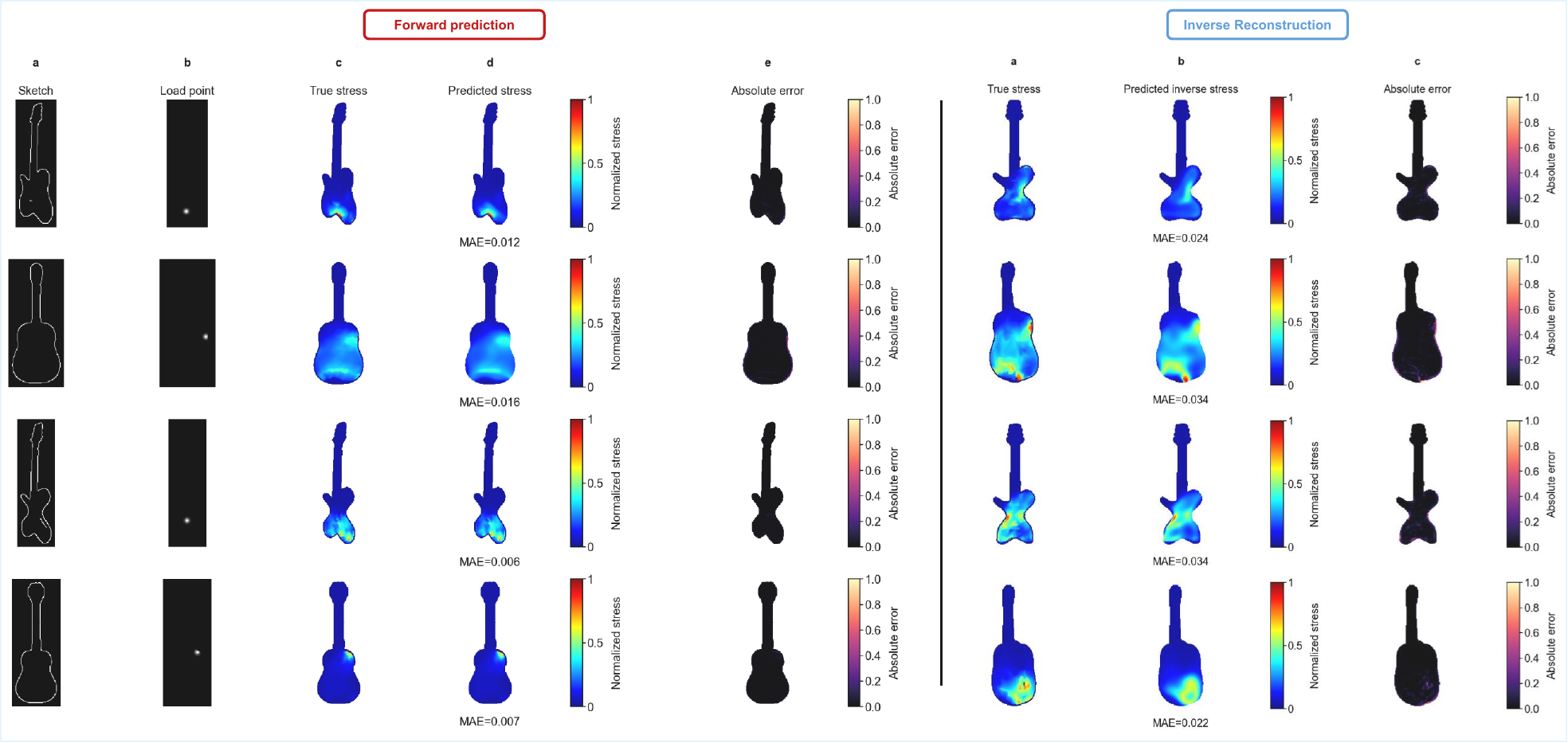}
    \caption{Sketch-to-stress forward prediction and response code reconstruction for four representative guitar configurations.Forward prediction (left): a), Input sketch. b) Load application-point map. c) Ground truth normalised stress field. d) Forward Janus prediction. e) Absolute pointwise error. Inverse reconstruction (right): a), Ground truth normalised stress field. b), Stress field reconstructed from the compact 8-dimensional response code $\mathbf{z} \in \mathbb{R}^8$. c), Absolute pointwise error. Forward predictions achieve masked MAEs of $0.006$--$0.016$, faithfully recovering dominant stress concentrations, hotspot locations, and global field topology across all four configurations. Inverse reconstructions from the response code achieve MAEs of $0.034$--$0.106$, preserving principal hotspot location and global stress topology.}
    \label{fig:janus_sketch_stress}
\end{figure}

As a supplementary demonstration, Janus is applied to the sketch-to-stress problem of Yu et al.~\cite{yu2024sketch2stress}. The publicly available guitar subset of the dataset comprises 78 unique structural geometries, each paired with approximately 118 distinct force-application locations, yielding 9,186 sketch--force--stress triplets in total. Each sample consists of a geometric sketch, a load application-point map, and a ground-truth stress field. Stress fields are recovered in scalar form by inverting the jet colormap of the dataset's RGB stress images via nearest-neighbour lookup against a precomputed colormap lookup table; background pixels are assigned zero. After masking, the resulting field expresses relative stress magnitude within the object boundary on a normalised $[0,1]$ scale, with $0$ denoting negligible response and $1$ corresponding to peak stress. Because a single base geometry underlies many force locations, the learning problem requires the model to infer not only the object's structural topology but also how the spatial stress distribution reorganises as the point of loading migrates across the structure.

The Janus framework is instantiated as follows. An encoder processes the two-channel input, consisting of the sketch and Gaussian application-point map, through four successive downsampling blocks into a shared bottleneck representation $\mathbf{x}_b$. A decoder then maps this bottleneck representation back to the full stress field for the forward prediction task. In parallel, two predictor branches operate on the pooled bottleneck features: a scalar peak-stress head that isolates the maximum response magnitude, and an 8-dimensional latent head that compresses the stress distribution into a compact response code $\mathbf{z} \in \mathbb{R}^8$.

The forward model is trained to reconstruct the stress field from the encoded geometric and loading information, while the latent head is trained jointly through a self-consistency penalty that anchors the predicted latent code $\mathbf{z}$ to the pooled bottleneck representation produced by the encoder. This encourages the latent branch to preserve the dominant structural information of the stress field while remaining compact and inversion-friendly.

As shown in Fig.~\ref{fig:janus_sketch_stress}, the forward branch achieves masked MAEs of $0.006$--$0.016$, faithfully recovering dominant stress concentrations, hotspot locations, and global field topology across structurally distinct configurations and qualitatively different loading patterns. The inverse branch, operating exclusively from the 8-dimensional code, achieves MAEs of $0.034$--$0.106$. The principal hotspot, global stress topology, and peak magnitude are preserved; visible degradation is confined to lower-response regions where absolute errors remain small. The roughly $2$--$3\times$ increase in MAE from forward to inverse is consistent with the information loss incurred by compressing the full spatial bottleneck into eight scalar coordinates, and confirms that $\mathbf{z}$ retains the mechanically salient features of the field whilst discarding fine spatial detail. This application illustrates a natural instantiation of the shared-latent principle in computational mechanics: geometry and loading condition are specified through a purely graphical interface, whilst a single latent pathway simultaneously supports dense field prediction and compact response-code-based reconstruction.

\section{Distributional sequence modelling of turbulence}

Leviathan is trained with the objective of maximising the log-likelihood of the ground-truth vorticity field under the predicted distribution. Formally, let a quantised vorticity field be tokenised into a sequence $u(t)=\{x_1,\dots,x_N\}$ where each token $x_{(\cdot)}$ is a spatial degree of freedom. The conditional distribution is thus parameterised by $p_\theta(x_{t+1,i} | u(t), x_{t+1,<i}),$ with $\theta$ denoting Leviathan's parameters. These are optimised by minimising the standard autoregressive cross-entropy loss objective,
\begin{align}
    \mathcal{L}(\theta)=-\mathbb{E}\!\left[\sum_{i}\log p_\theta\!\left(x_{t+1,i} \mid u(t), x_{t+1,<i}\right)\right].
\end{align}
In this work, the sequence length is fixed at $N=4096$, corresponding to a flattened representation of a $64\times64$ spatial grid. To leverage the high-resolution $512\times512$ turbulence data, the domain is partitioned into $8\times 8$ non-overlapping subdomains. These 64 distinct patches are treated as independent spatial streams to greatly amplify the training corpus. In this setting, then, Leviathan is tasked with learning local, translation-invariant rules rather than global flow structures. 

A direct physical implication is that Leviathan operates on a free-boundary corpus where the original simulations were periodic. This increases the problem's inherent entropy: the global system is conservative, but the local patches function as open thermodynamic systems where quantities typically conserved -- mass, vorticity, energy -- flux freely across boundaries.

Turbulent rollout proceeds via autoregressive generation. The next predicted spatial token $x_{t+1,i}$ is sampled from the output distribution and appended to the context window for subsequent predictions. This exposes Leviathan to the Lyapunov instability of the chaotic system, wherein infinitesimal errors compound over successive autoregressive steps. Computationally, this simulation -- unavoidably serial -- makes use of Key-Value caching to reduce the complexity of each autoregressive step from $O(N^2)$ to $O(N)$.

\end{document}